\documentclass{article}

\usepackage{arxiv}

\usepackage{graphicx}%
\usepackage{multirow}%
\usepackage{amsmath,amssymb,amsfonts}%
\usepackage{amsthm}%
\usepackage{mathrsfs}%
\usepackage[title]{appendix}%
\usepackage{xcolor}%
\usepackage{textcomp}%
\usepackage{manyfoot}%
\usepackage{natbib}
\usepackage{booktabs}%
\usepackage{algorithm}%
\usepackage{algorithmicx}%
\usepackage{algpseudocode}%
\usepackage{listings}%
\usepackage{tikz}%
\newtheorem{theorem}{Theorem}%  meant for continuous numbers
\newtheorem{proposition}{Proposition}% to get separate numbers for theorem and proposition etc.

\newtheorem{remark}{Remark}%

\newtheorem{corollary}{Corollary}%
\newtheorem{lemma}{Lemma}%

\usepackage{cleveref}
\usepackage{mathrsfs} 
\usepackage{ wasysym }
\usepackage{threeparttable}
\usepackage{multirow}
\usepackage{booktabs}
\usepackage{xargs}                      % Use more than one optional parameter in a new commands
\usepackage[pdftex,dvipsnames]{xcolor}  % Coloured text etc.
\usepackage[colorinlistoftodos,prependcaption,textsize=small]{todonotes}
\newcommandx{\unsure}[2][1=]{\todo[linecolor=red,backgroundcolor=red!25,bordercolor=red,#1]{#2}}
\newcommandx{\change}[2][1=]{\todo[linecolor=blue,backgroundcolor=blue!25,bordercolor=blue,#1]{#2}}
\newcommandx{\info}[2][1=]{\todo[linecolor=OliveGreen,backgroundcolor=OliveGreen!25,bordercolor=OliveGreen,#1]{#2}}
\newcommandx{\imp}[2][1=]{\todo[inline, linecolor=Plum,backgroundcolor=Plum!25,bordercolor=Plum,#1]{#2}}

\usepackage[utf8]{inputenc}
\DeclareUnicodeCharacter{221E}{$\infty$}

\newcommand{\ie}{\textit{i.e. }}
\newcommand{\eg}{\textit{e.g. }}
\newcommand\num{\stepcounter{equation}\tag{\theequation}}

\newcommand{\kkx}{ \tilde{\kk}_{\tx_j} }
\newcommand{\hkkx}{\hat{\tilde\kk}_{\tx_j}}

\newcommand{\hil}{\mathbb{H}}

\newcommand{\kk}{\mathbf{k}}

\newcommand{\sign}{\operatorname{sign}}

\newcommand{\R}{\mathbb{R}}

\newcommand{\D}{\mathcal{D}}

\newcommand{\E}{\mathbb{E}}

\newcommand{\err}{\mathcal{E}}

\newcommand{\dg}{\operatorname{diag}}

\newcommand{\ty}{\Tilde{y}}
\newcommand{\tx}{\Tilde{x}}

\newcommand{\PP}{\mathbb P}

\Crefname{proposition}{Proposition}{Propositions}
\crefname{proposition}{proposition}{propositions} % Match environment name

\newcommand{\domX}{\mathcal{X}}  % domain of X
\newcommand{\kfn}{{\rm k}}  % kernel function k(•,•)
\title{On Kernel Eigen-alignments of KRR: Reconstruction and Generalization}
\author{
 Yang Liu \\
  Galisano College of Coumputing and Information Science\\
  Rochester Institute of Technology\\
  Rochester, NY\\
  \texttt{yl4070@rit.edu} \\
   \And
    Ernest Fokoue\\
    School of Mathematics and Statistics\\
  Rochester Institute of Technology\\
  Rochester, NY \\
  \texttt{epfeqa@rit.edu} \\
  \And
 Richard Lange\\
  Galisano College of Coumputing and Information Science\\
  Rochester Institute of Technology\\
  Rochester, NY \\
  \texttt{rdlvcs@rit.edu} \\
  \And
 Daniel Krutz\\
  Galisano College of Coumputing and Information Science\\
  Rochester Institute of Technology\\
  Rochester, NY \\
  \texttt{dxkvse@rit.edu} \\
}
\begin{document}
\maketitle

\begin{abstract}
This paper investigates the critical role of eigenalignments between the kernel matrix and learning targets in achieving robust generalization in learning problems. We establish a direct connection between generalization performance in kernel methods and the estimation of eigenvectors and eigenvalues of matrices, offering a more intuitive understanding compared to prior work with minimal assumptions. 
We also show that, since the prediction task in KRR is essentially the weighted sum of eigenvectors/singular vectors, by analyzing how much error can be caused by perturbations to the kernel matrix, we can then derive a bound on this generalization error using the estimation stability of matrix eigenvalues and eigenvectors.
Compared with previous work, our analysis concentrates on finite-sample settings and on the generalization error arising from having a suboptimal finite training set. 
Our findings reveal that in kernel methods, as long as the kernel is of high rank, the near-zero reconstruction error can be trivially obtained, implying that the reconstruction error will have limited predictive power for generalization. 
Finally, we establish a generalization bound from an eigenvalues/eigenvectors estimation perspective, showing that strong generalization requires increasing eigenvector alignment, eigenvalue magnitude, or gaps between consecutive eigenvalues.
\end{abstract}
% \keywords{Learning Theory, Eigen-alignment, KRR}

%%\pacs[JEL Classification]{D8, H51}
%%\pacs[MSC Classification]{35A01, 65L10, 65L12, 65L20, 65L70}

\maketitle

\newpage
\section{Introduction}\label{sec:intro}

This paper investigates the critical role of eigenalignments between the kernel matrix and learning targets in achieving satisfactory generalization performance in learning problems. These eigenalignments, which represent the inherent properties of the data and the selected kernel, are frequently overlooked in conventional learning theory. We establish a direct connection between generalization in kernel ridge regression and the estimation of eigenvectors and eigenvalues. Our work is based on the foundational work of \cite{canatar2020, jacot2020, simon2021}. Compared to previous work, our analysis focuses on the finite sample regime, thus avoiding issues related to the slow convergence rate that has been pointed out by \cite{simon2021, wei2022}. Based on a novel perspective of matrix perturbation theory, our analysis makes minimal assumptions and provides a more intuitive and clearer understanding of the mechanism of eigenalignments through the lens of eigenvectors/eigenvalues estimations, revealing previously hidden or unnoticed implications. We show that for a given kernel matrix, the generalization error can depend non-trivially on the learning targets. Additionally, in high dimensions, non-parametric kernels can achieve nearly zero reconstruction error, rendering the relationship between reconstruction error and generalization error negligible in these cases. We demonstrate this using synthesized learning targets, as classical machine learning datasets tend to contain well-behaved labels, which can lead to spurious correlations. For a given kernel matrix with enough rank, we demonstrate that achieving a near-zero training loss is nearly always feasible. This finding implies that the training error has limited utility in predicting the generalization error.

Furthermore, we establish a generalization bound based on the predictive estimation of finite sample eigenvalues and eigenvectors. We show that a good generalization requires either an increase in the alignment of eigenvectors, an increase in the eigenvalue, or a widening of the gap between adjacent eigenvalues. Our results show that the regularization parameter often has a minimal impact. This suggests that its primary role is to mitigate numerical instability, rather than its traditionally perceived function of reducing hypothesis space in the non-parametric space. When regularization does influence generalization, it is due to its alignment with trailing eigenvectors, occurring when the corresponding eigenvalue is comparable in value to the regularization parameter. We show that the regularization is more related to salvaging a bad performance than improving a good performance. Hence, we can summarize this using the common bias-variance terminology: bias is attributed to the inability of the top eigenvectors to represent the learning targets, while variance arises from the requirement for the learning targets to be represented by the trailing eigenvectors. These findings align with the existing literature \eg \cite{hastie2022, liang2018}. 
Our findings also suggest that overfitting in the non-parametric kernel ridge regression setting is essentially related to the alignments of trailing eigenvectors, which are dominated by noise. As a result, this implies the true nature of overfitting, in this case, is the kernel lacking the capacity to differentiate noise and information due to either the data being fundamentally too noisy or the kernel choice not being optimal. 

% Additionally, we explore the relationship between eigenalignment and the double descent phenomenon. Our empirical findings indicate that double descent can manifest when the target variables exhibit a correlation with trailing eigenvectors.
Our analysis emphasizes the effects of the choice of learning targets on the model's generalization behavior, particularly in the context of Kernel Ridge Regression (KRR). \Cref{fig:targetmatters} presents two key aspects of our findings: we demonstrate that the generalization error (on a logarithmic scale) increases as the rank of the eigenvector with which the learning targets are aligned increases. Regularization, on the other hand, only has a slightly positive impact (reducing the generalization error) when the learning targets are aligned with trailing eigenvectors. The details of the experiments can be found in \Cref{sec:exp1}.

% We carefully analyze the intricate relationships among reconstruction error, overfitting, and robust generalization.
% In the high-dimensional non-parametric kernel methods regime,
% However, we believe that the effects of the learning targets may extend beyond KRR.  Intuitively, this concept can be illustrated using a straightforward example involving 1-dimensional data.  

\begin{figure}[!ht]
    \centering
    \includegraphics[width=0.75\linewidth]{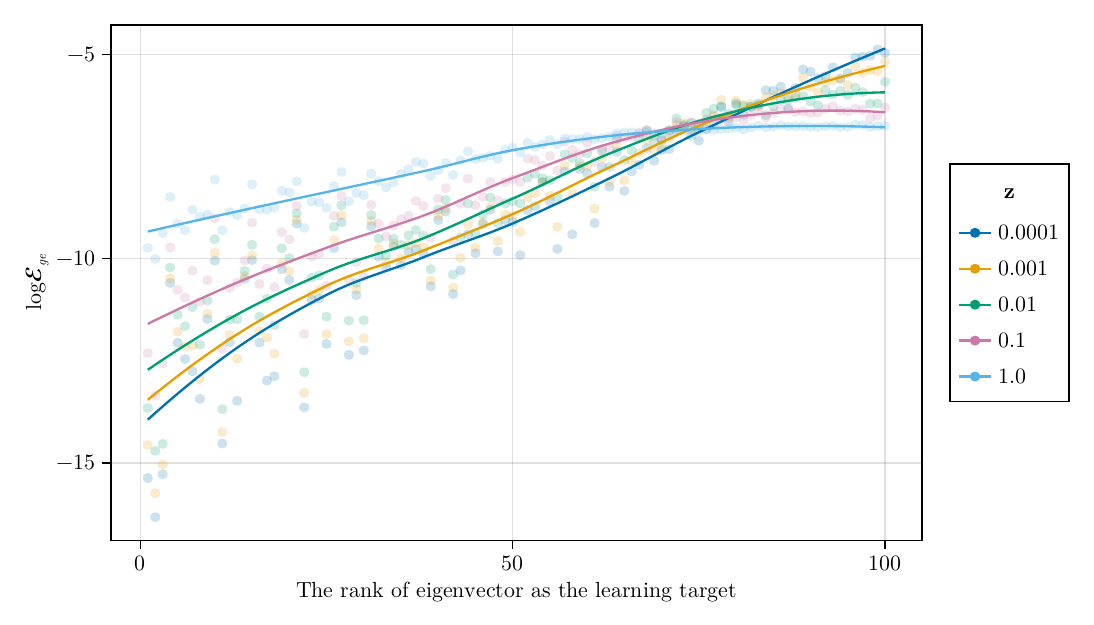}
    \caption{Learning targets of the task affect the generalization performance. Learning targets aligned with top eigenvectors (lower rank) tend to have lower generalization errors. Regularization tends to have a negative impact on generalization error for most learning targets, except those aligned with trailing ones (high-rank eigenvectors).}
    \label{fig:targetmatters}
\end{figure}

\subsection{Problem Setup}\label{sec:prob}

In this paper, we study the kernel selection problem in the context of large-dimension kernel ridge regression, which is also referred to as kernel LS-SVM in the literature. Our work is in the same vein as recent research in kernel alignment \eg \cite{jacot2020, canatar2020}.

We consider the typical setting of Kernel Ridge Regression (KRR). Denote the training dataset $\mathcal{D} = \{X, y\}$ of size $n$ and let $K_n$ represent the kernel constructed from the input data sample $X$. Each entry in $K_n$ is evaluated using $\kfn(x_i, x_j)$, where a kernel function $\kfn : \domX \times \domX \rightarrow \R$, $K_n$ represents the $n \times n$ matrix of kernel evaluations $\kfn(x_i, x_j)$, and KRR learns a function that maps any given $x$ to a prediction based on a weighted sum of kernel evaluations on the training data:
$$\hat{f}_n(x) =\sum_{i=1}^n \alpha_i \, \kfn(x,x_i) \, ,$$
where $(\alpha_1, \ldots, \alpha_n)^\top = (K_n+ zI_n)^{-1} y$ and $z \in \R^+$ is the ridge regularization parameters. Although our analysis can be applied to all kernel functions, our focus is leaning using inner product kernels in high-dimension cases, as we consider them to be more relevant in modern machine learning settings. As shown in \citet{couillet2016, couillet2021, liao2019}, inner product kernels are more effective than distance-based kernels such as Gaussian kernels: distance-based kernels minus the diagonal has an asymptotic rank of one. By inner product kernels, we specifically refer to a kernel matrix calculated using
$$
K_{i,j} = h(x_i^\top x_j),
$$
where $h$ is some chosen function. 
We will refer to the training data labels as $y$ learning targets and will use $\hat{y}$ to denote the predictions or reconstructions of the training data, which can be expressed simply as
$$
\hat{y} = K_n (K_n+ zI_n)^{-1} y \, .
$$
The reconstruction error, also known as training loss, is
$$
\err_{re} := \frac{1}{n} \sum_{i=1}^n(y_i - \hat{y_i})^2,
$$
where $y_i$ denotes the $i$th label for $y$. %In the remainder of the paper, \textit{we will refer to this setting as the KRR assumptions.} 
For any pair of observed data $\{\tilde{X}, \ty\}$, the predictions are
$$
\hat{\ty} = \tilde{K} (K_n+ zI_n)^{-1} y \,, 
$$
where $\tilde{K}_{i,j} = \kfn(\tx_i, x_j)$.
We are particularly interested in how the learning target affects the generalization error, which is also quantified using the expected squared error:
$$
\err_{ge} := \E(\ty_j - \hat{\ty}_j)^2,
$$
where $(\tx_j, \ty_j)$ is an unseen data point and $\hat{\ty}_j = f(\tx_j)$. Furthermore, in our analysis, we will extensively utilize the spectral decomposition of the kernel matrix $K_n$, in the sense that $K_n = \sum_{i=1}^n d_i u_i u_i^\top$. We will assume that the eigenvalues are sorted $d_1 \geq d_2 \geq \cdots \geq d_n$. We refer to the first few (small-number) eigenvalues and eigenvectors, characterized by small values, as the top eigenvalues and top eigenvectors, whereas the remaining eigenvalues and eigenvectors are termed trailing eigenvalues and trailing eigenvectors.

\noindent
We list the most used notation in \Cref{tab:notation}. 
% \subsection*{Symbols and Notations}

\begin{table}[!h]
    \centering
    \begin{tabular}{c c }
        \toprule
        Notation & Meaning\\
        \midrule
         $K_n$ & $n\times n$ kernel matrix \\ 
         $\kfn(\cdot, \cdot)$   & Scalar kernel function \\
         $u$  & Eigenvector of the kernel matrix \\
         $d$  & Eigenvalue of the kernel matrix \\
         $z$  & Regularization parameter \\
         $x, y$ & Training input data point and learning targets  \\
         % $a$ & kernel weight/coefficient as in $\hat{f}_n(x) = \sum_{i=1}^n \hat a_i k(x_i, x)$\\
         $[\cdot]_i$ & $i$th data point \\
         $[\cdot]^*$  & Optimal decorator \\ 
         $\hat{[\cdot]}$ &  Prediction decorator for $(x, y)$\\
         $\tilde{x}$, $\tilde{y}$ & Testing data pair\\
         % $\hat{\tilde{y}}$ &  Prediction for test learning targets\\
         $\err_{re}$ & Reconstruction error \\
         $\err_{ge}$ & Generalization error \\
         $\kkx$ &  Row vector from the kernel matrix corresponding to $\tilde{x}_j$\\
         $\PP$ & Projection operator\\
        \bottomrule
    \end{tabular}
    \caption{List of Notations Used}
    \label{tab:notation}
\end{table}

% \subsection*{Main Contributions}

% We properly analyze the non-parameteric kernel metric, or KRR, through the lense of eigen-alignment. 
% Our main findings include the following:
% \begin{enumerate}
%     \item Show that the reconstruction of learning targets in KRR is a weighted sum of eigenvectors of the kernel matrix. 
%     \item Show that, as in non-parametric high-dimensional regime that we can always obtain near zero loss, the training error has little predictive power over generalization. 
%     \item Show that a small predictive error depends on the stable estimation of eigenvectors and eigenvalues, implying that only learning targets aligned with top eigenvectors can generalize well.
%     \item Demonstrate that in high-dimensional regime when learning targets are aligned with trailing eigenvectors, the double descent phenomenon can be observed. 
% \end{enumerate}

\section{Main Results}\label{sec:res}
\subsection{Summary Of Results and Contributions}

We first provide a brief summary of the main results presented in our paper, delegating detailed discussions to the subsequent sections. In the KRR setting, the reconstructions can be written in terms of eigenvalues and eigenvectors of $K_n$:
$$\hat{y} = \sum_{i=1}^n u_i (d_i u_i^\top y)/(d_i + z) \, .$$
In other words, training data reconstructions are simply weighted sums of eigenvectors \cite{canatar2020,jacot2020,simon2021}.
%We refer to this as the eigen-characterization of the KRR.
In equation (\ref{eq:bvdecomp}), we will then introduce a novel decomposition of the reconstruction error:
\begin{align*}
    \| y - \hat{y}\|^2_n  = n\err_{re}= \sum_{i: d_i \neq 0} \frac{z^2 (u_i^\top y)^2}{(d_i + z)^2} + \sum_{i: d_i = 0}(u_i^\top y)^2.
\end{align*}
where $\|\cdot\|_n$ is the empirical norm of $L_2$. We observe that the minimization of the reconstruction error $\err_{re}$ is not correlated with the accuracy of estimating the eigenvalues and eigenvectors. This implies that reconstruction is a rather trivial task in high-dimensional spaces, and the generalization performance of KRR cannot be accurately predicted by the training error alone, in contrast to the methods of \cite{jacot2020}.

We proceed to demonstrate that the key to the generalization error (as opposed to the reconstruction error) lies in an accurate estimation of eigenvalues and eigenvectors. Specifically, we show that for any new data point $\tx_j$, when the learning targets are aligned with the top eigenvectors of the kernel matrix, the predictive estimation error can be bounded by
$$
C_j\sum_{i=1}^n|\PP_{u^*_i}^y|\left( \sum_{k:k\neq i, \PP_{u^*_k}^y\neq 0}\frac{2\|\Delta K\|}{|d^*_i-d^*_k|} + (1 - \Theta_i) + \frac{\|\Delta K\| +\sqrt{2- 2\Theta_i} }{d^*_i + z} \right),
$$
where $K^*_n$ is the ``optimal kernel matrix'' of size $n$. Here, $\|\cdot\|$ for the matrix is understood as the operator norm, $\{d^*_i, u^*_i\}$ represents the eigenpairs of $K_n^*$, $C_j$ is a multiplier that depends on the new data point $\tx_j$ but not on the training data, and $\Theta_i$ denotes the estimation alignment of the $i$th eigenvector (which will be introduced formally later).
For any given training set, $K^*_n$ and $K_n$ are fixed. Hence, we notice that in order to achieve good generalization, we need either to increase the alignment of eigenvectors, increase the eigenvalue $d^*_i$, or increase the eigen gap $|d^*_i-d^*_k|$; all of these can be achieved if the learning targets are aligned with the top eigenvectors, \ie $\PP_{u^*_i}^y := u_i^{*\top} y = 0$ for all $i$ except for several top eigenvectors $u^*_i$. 

\subsection{Eigen-characterization of KRR}

We first present a straightforward yet insightful eigen-characterization of the KRR. While the results resemble earlier findings in the literature, such as \cite{jacot2020, canatar2020, simon2021}, we provide a more intuitive interpretation and offer new insights. 
The goal is to understand the training and generalization behavior of KRR in a non-parametric regime, particularly in high dimensions. 
For this purpose, recall the reconstruction operator for $z \in \R^+$,
$$
A = K_n  ( K_n  + zI_n)^{-1} .
$$
then $\hat{y} = A y.$
% During the predictions of the test set, we replace $K$ with $\tilde{K}$, which is the predictive Gram matrix. Then, $\hat{\tilde{y}} = \tilde{A} y$,
Through algebraic manipulation, we have
\begin{align*}
   A &=  K_n  ( K_n  + zI_n)^{-1} \\
   &=  K_n  ( K_n  + zI_n)^{-1}\\
   % A_{kk} &= \tilde{d}_k \tilde{}_{.\ell}_{.k}^\top ( K  + zI_n)^{-1}
   % &= U \Lambda U^\top Q_{K}(z)\\
   &= U\Lambda U^\top \sum_{i=1}^n \frac{u_iu_i^\top}{d_i + z} \\
   &= \sum_{i=1}^n \frac{d_i u_i u_i^\top}{d_i + z} 
\end{align*}
where $\Lambda = \dg(d_1, \dots, d_N)$;
thus, for training predictions
\begin{align*}
    \hat{y} &= A y\\
    &= \left( \sum_{i=1}^n \frac{d_i u_i u_i^\top}{d_i + z} \right) y\\
    &= \sum_{i=1}^n \frac{d_i u_i u_i^\top y}{d_i + z} \\
    &= \sum_{i=1}^n \frac{(d_i \PP_{u_i}^y) u_i}{d_i + z} \num \label{eq:recon}
\end{align*}
where $P$ is the projection operator such that $\PP_{u_i}^y := u_i^\top y$. Therefore, predictions are the weighted sum of eigenvectors, with weight $(d_i\PP_{u_i}^y)/(d_i + z)$. The alignment of the target vector suggests the magnitude of the contribution; the regularization coefficient $z$ on the other hand, determines the directions of the contribution. Poor alignment of the eigenvector with the target vector, despite the large eigenvalue of the kernel matrix, can result in a negligible contribution to the final predictions. Additionally, this implies that the target vector can learn at most the linear combinations of the eigenvectors of the kernel matrix.  This result resembles Proposition 11 of \cite{muandet2016}.

Since $\{u_i\}_i^n$ forms an orthonormal basis, $y = \sum_i \PP_{u_i}^y u_i$ and $d_i/(d_i+z)$ actually adjust the weights to ensure that the predictions are more dependent on the dominant eigenvectors. This approach avoids reliance on purely noisy eigenvectors and incorporates reconstruction loss along with generalization error. \textit{However, this non-rigorous intuition is not precise. We will rigorously analyze both reconstruction and generalization later. } 

\begin{remark}[Regularization]
    In the KRR setting, unlike in spectral clustering, performance depends on the full spectrum of the kernel matrix, not just the top eigenvectors. In particular, $z>0$ helps to regularize the model by favoring eigenvectors that correspond to large eigenvalues; this is in the same spirit as PCA: when $d_i + z \gg d_i \PP^y_{u_i}$, the corresponding eigenvector will be negligible.
    This also resembles the idea of signal capture threshold from \cite{jacot2020}: if the signal along an eigenvector direction is too weak compared to the corresponding eigenvalue, the signal will not be learned by the model.  
\end{remark}

\begin{remark}\label{rm:reg}
When $z=0$, all eigenvectors contribute equally; however, when $z>0$, the eigenvectors corresponding to larger eigenvalues receive greater relative weights. Fine-tuning $z$ therefore initially seems to help informative eigenvectors stand out. However, as shown above, $z$ can be negligible in $d_i+z \gg d_i\PP_{u_i}^y$ if $d_i$ is large. In this regime, $z$ is negligible. Conversely, $z$ has a non-negligible effect only when the labels are aligned with the trailing eigenvectors, indicating a less than ideal learning scenario. This finding is consistent with existing literature, such as \cite{liang2018,hastie2022}, suggesting that regularization is not essential to achieve good performance.
% Furthermore, since only a few eigenvectors corresponding to dominant eigenvalues are informative, it is natural that a label vector $y$ aligns better with those to be learned effectively.
% Additionally, this also coincides with recent findings on kernel alignments, such as \cite{canatar2020, jacot2020}, that 
\end{remark}

\subsection{Reconstruction Error}

Following our eigen-characterization of the KRR, we further analyze the reconstruction error.

\begin{proposition}\label{prop:mse}
    According to the KRR assumptions, the reconstruction error of $y$, with respect to the kernel matrix $K_n$, is 
    \begin{equation}
    \err_{re} := \frac{1}{n}\|y - \hat{y}\|_n^2 =  \frac{1}{n} \sum_{i=1}^n \left(\frac{\PP_{u_i}^y\; z}{d_i+z}\right)^2\label{eq:loss}
    \end{equation}
    where $z \in \R^+$ is the regulation parameter, $\PP_{u_i}^y := u_i^\top y$ and $\{u_i\}_{i=1}^n$ are the eigenvectors of the kernel matrix $K_n$.
\end{proposition}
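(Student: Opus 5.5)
We expand the residual $y-\hat y$ in the eigenbasis of $K_n$ and then use Parseval. Since $K_n$ is a symmetric kernel matrix, the spectral theorem gives an orthonormal basis $\{u_i\}_{i=1}^n$ of $\R^n$ with $K_n=\sum_i d_i u_iu_i^\top$ and $d_i\ge 0$. Hence
$$y=\sum_{i=1}^n \PP_{u_i}^y\,u_i .$$
By the eigen-characterization \eqref{eq:recon},
$$\hat y=\sum_{i=1}^n \frac{d_i\PP_{u_i}^y}{d_i+z}\,u_i .$$
Subtracting coordinate by coordinate in this basis gives
\begin{align*}
y-\hat y=\sum_{i=1}^n \PP_{u_i}^y\Bigl(1-\frac{d_i}{d_i+z}\Bigr)u_i=\sum_{i=1}^n \frac{z\,\PP_{u_i}^y}{d_i+z}\,u_i .
\end{align*}
Equivalently, $I_n-A=z(K_n+zI_n)^{-1}$ is diagonal in the same basis with entries $z/(d_i+z)$.

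Next we take the squared Euclidean norm. Because the $u_i$ are orthonormal, Parseval's identity gives
$$\|y-\hat y\|^2=\sum_{i=1}^n\Bigl(\frac{z\,\PP_{u_i}^y}{d_i+z}\Bigr)^2 .$$
Dividing by $n$ yields \eqref{eq:loss}. Here $\|\cdot\|_n$ is read as the Euclidean norm on $\R^n$, so that $\frac1n\|y-\hat y\|_n^2$ coincides with the definition $\err_{re}=\frac1n\sum_i(y_i-\hat y_i)^2$ from \Cref{sec:prob}. This convention is consistent with the summary display $\|y-\hat y\|_n^2=n\err_{re}$.

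No step is difficult. The one point needing care is that every denominator $d_i+z$ is nonzero. This holds because $K_n$ is positive semidefinite and $z>0$, so $d_i+z\ge z>0$. The same fact makes $K_n+zI_n$ invertible and guarantees that the eigenvectors of $A$ coincide with those of $K_n$, which justifies \eqref{eq:recon}.

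It is also worth remarking how this formula relates to the summary display with its separate sum over $\{i:d_i=0\}$. For such $i$ the weight $z/(d_i+z)$ equals $1$, so the term is $(\PP_{u_i}^y)^2$. The two forms therefore agree for every $z>0$.
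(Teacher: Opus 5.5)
Your proof is correct and follows essentially the same route as the paper: expand $y$ and $\hat y$ in the orthonormal eigenbasis of $K_n$, note that the residual has coefficients $z\,\PP_{u_i}^y/(d_i+z)$, and sum the squares using orthonormality. Your extra care about the denominators $d_i+z$ being nonzero and about reading $\|\cdot\|_n$ as the Euclidean norm fills in points the paper leaves implicit.
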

\begin{proof}

Following our previously established notations, we have 
\begin{align*}
    % R = \frac{1}{n}\tr((\hat{y}-y)^\top(\hat{y}-y)) &= \left|\sum_{i=1}^n \frac{(d_i \PP_{u_i}y)u_i}{d_i + z} - y\right|\\
   \hat{y}-y &= \sum_{i=1}^n \frac{(d_i \PP_{u_i}^y)u_i}{d_i + z} - y\\
    % &= \left( \sum_{i=1}^n \frac{d_i u_i u_i^\top}{d_i + z} - I_n \right) y\\
    &= \sum_{i=1}^n \frac{d_i u_i \PP_{u_i}^y}{d_i + z} - \sum_{i=1}^n u_i u_i^\top y\\
    &= \sum_{i=1}^n \frac{d_i u_i \PP_{u_i}^y}{d_i + z} - \sum_{i=1}^n u_i \PP_{u_i}^y\\
    % &= \sum_{i=1}^n \frac{d_i u_i \PP_{u_i}^y - u_i \PP_{u_i}^y(d_i + z)}{d_i + z}\\
    &= -\sum_{i=1}^n \frac{z u_i \PP_{u_i}^y}{d_i + z}\\
    % &= (U  \Lambda(\Lambda+ zI_n)^{-1} U^\top - I_n)y\\
    % &= U(\Lambda(\Lambda+ zI_n)^{-1} - I_n)U^\top y.
\end{align*}
Then, reconstruction error 
\begin{align*}
    \err_{re} &= \frac{1}{n}(\hat{y}-y)^\top(\hat{y}-y) \\
    &= \frac 1 n \left(\sum_{i=1}^n \frac{z u_i \PP_{u_i}^y}{d_i + z}\right)^\top\left(\sum_{j=1}^n \frac{z u_j \PP_{u_j}^y}{d_j + z}\right)\\
    % &= \frac{1}{n}\tr\left(U(\Lambda(\Lambda+ zI_n)^{-1} - I_n)U^\top y y^\top U(\Lambda(\Lambda+ zI_n)^{-1} - I_n)U^\top\right) \\
    % &= \frac{1}{n} \tr\left((\Lambda(\Lambda+ zI_n)^{-1} - I_n)U^\top y y^\top U(\Lambda(\Lambda+ zI_n)^{-1} - I_n)\right)\\
    &= \frac{1}{n} \sum_{i=1}^n \left(\frac{\PP_{u_i}^y\; z}{d_i+ z}\right)^2.
\end{align*}
\end{proof}
\begin{remark}
    Although it may be tempting to consider that the reconstruction error is minimal when $y \bot U$ (the eigenvector matrix), this is not valid. By construction, the columns of $U$ span the space of $\R^n \ni y$. In fact, for a given $y$, the quantity in the numerator, $\sum_{i=1}^n (\PP^y_{u_i})^2 = \|y\|^2$, is independent of the choice of the kernel. Hence, minimizing the reconstruction error is a matter of choosing a kernel that $d_i$ is large for whichever modes $u_i$ are most aligned with $y$ \cite{bordelon2020,simon2021}.%Hence, $y$ has to align with at least one of the eigenvectors and ideally to be aligned with the eigenvector corresponding to a large eigenvalue to minimize the mean squared error.
\end{remark}
According to \Cref{prop:mse}, for the KRR to have a small reconstruction error, the adjusted projection magnitude should be small compared to the corresponding adjusted eigenvalue (by the regularization parameter $z$). 
We observe that the reconstruction error can be minimized in the following sense.
\begin{proposition}[Reconstruction Error]\label{prop:recon}
    In KRR, the reconstruction error can be decomposed as
    \begin{equation}
    % \| y - \hat{y}\|^2_n  
    \err_{re}= \sum_{i: d_i \neq 0} \frac{z^2 (\PP_{u_i}^y)^2}{(d_i + z)^2} + \sum_{i: d_i = 0}(\PP_{u_i}^y)^2. \label{eq:bvdecomp}
    \end{equation}
    If $\PP_{u_i}^y = 0$ for $i \in \{i : d_i = 0\}$, then for any $\varepsilon > 0$, the reconstruction error $\err_{re} < \varepsilon$ corresponds to some regularization parameter $z$.
\end{proposition}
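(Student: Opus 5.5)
The plan is to start from \Cref{prop:mse} and split the eigenindex set $\{1,\dots,n\}$ according to whether $d_i = 0$ or $d_i \neq 0$. For the indices with $d_i = 0$, the weight $z/(d_i+z)$ equals $1$ for every $z>0$, so those terms contribute exactly $(\PP_{u_i}^y)^2$. The remaining terms keep the form $z^2(\PP_{u_i}^y)^2/(d_i+z)^2$. This gives the decomposition (\ref{eq:bvdecomp}). As in the displayed summary formula, I would read (\ref{eq:bvdecomp}) as $n\err_{re}$, or equivalently carry the harmless factor $1/n$ throughout; it does not affect the limiting statement.

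For the second claim, assume $\PP_{u_i}^y = 0$ whenever $d_i = 0$, so the second sum vanishes identically. Since $K_n$ is a kernel (Gram) matrix, it is positive semidefinite, so every $d_i$ in the first sum is strictly positive. Let $d_{\min} := \min\{d_i : d_i \neq 0\} > 0$. Then for every $z>0$,
\begin{equation*}
\sum_{i: d_i \neq 0} \frac{z^2 (\PP_{u_i}^y)^2}{(d_i + z)^2} \le \frac{z^2}{d_{\min}^2}\sum_{i} (\PP_{u_i}^y)^2 = \frac{z^2\|y\|^2}{d_{\min}^2}.
\end{equation*}
Here I use the orthonormality identity $\sum_i (\PP_{u_i}^y)^2 = \|y\|^2$ noted in the remark after \Cref{prop:mse}. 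Given $\varepsilon>0$, I would choose any $z$ with $0 < z < d_{\min}\sqrt{\varepsilon}/\|y\|$ (any $z>0$ works if $y=0$). This makes $\err_{re}<\varepsilon$, including the $1/n$ normalization if it is kept. In fact the error tends to $0$ as $z\to 0^+$.

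The main obstacle is the sign of the eigenvalues. If $K_n$ were not positive semidefinite, some $d_i<0$ would make $d_i+z$ vanish at $z=-d_i$ and could invalidate the uniform bound. I would handle this by invoking positive semidefiniteness of the kernel matrix, which is standard for the inner-product kernels considered here. For an indefinite $K_n$, the argument still goes through using $\min_{d_i\neq 0}|d_i|$, provided $z < \tfrac12\min_{d_i\neq 0}|d_i|$: then $|d_i+z|\ge \tfrac12|d_i|$, and the same bound holds with an extra factor of $4$.
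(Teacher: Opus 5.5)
Your proposal is correct and follows the paper's proof. You split the sum from \Cref{prop:mse} according to whether $d_i = 0$, remove the zero-eigenvalue terms using the hypothesis $\PP_{u_i}^y = 0$, and let $z \to 0^+$; your explicit bound $z^2\|y\|^2/d_{\min}^2$ (with the $1/n$ normalization and $z/(d_i+z)=1$ at $d_i=0$ made explicit) is simply a quantitative version of the limit the paper invokes.
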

\begin{proof}
From \cref{eq:loss}, we have    
\begin{align*}
\frac{1}{n} \sum_{i=1}^n \left(\frac{\PP_{u_i}^y\; z}{d_i+ z}\right)^2 
&= \frac{1}{n}\left( \sum_{i: d_i \neq 0} \left(\frac{\PP_{u_i}^y z}{d_i + z}\right)^2 + \sum_{i: d_i = 0}\left(\frac{\PP_{u_i}^y z}{d_i + z}\right)^2 \right)\\
&= \frac{1}{n}\left( \sum_{i: d_i \neq 0} \left(\frac{\PP_{u_i}^y z}{d_i + z}\right)^2 \right)
\end{align*}
The last equality holds due to our assumption $\PP_{u_i}^y = 0$ for $i \in \{i : d_i = 0\}$. Since 
$$
\lim_{z\rightarrow 0} \frac{1}{n}\left( \sum_{i: d_i \neq 0} \left(\frac{\PP_{u_i}^y z}{d_i + z}\right)^2 \right) = 0,
$$
the claim holds.
\end{proof}

\begin{remark}
    In theory, as long as the eigenvalue $d_i > 0$, we can find some $z$, such that $\forall \varepsilon >0, \err_{re} < \varepsilon$. However, in practice, this is limited by the machine precision or numerical error. 
\end{remark}
% 
% As the concept of dimension of learnability suggests, in order for KRR to reconstruct the learning target $y$, $y \in \mathcal{Y}_\varrho$ must hold. We will show next that $\mathcal{Y}_\varrho$ is actually a quite large space, as for any $y \in \mathcal{Y}_\varrho$, there is little assurance of generalization. 

\Cref{prop:recon} shows that given the learning targets, as long as the rank of the kernel matrix permits, the reconstruction error can be arbitrarily small by reducing the regularization parameters. This suggests a disconnect between the reconnection error and the generalization error, as illustrated below.
% To this end, and following our previous analysis, we can decompose the reconstruction error into two parts and $ \mathcal{Y}\varrho$ spanned by precisely the eigenvectors of the kernel matrix that correspond to non-zero eigenvalue. 

\begin{corollary}[Triviality of Reconstruction]\label{cor:triv}
    % The reconstruction $\hat{y}$ of a learning target $y$ using KRR with a full rank kernel matrix, 
    % can be decomposed with respect to its eigenvectors as follows:

    % where $z$ is the regularization parameter. Furthermore, the reconstruction error of the KRR depends only on the rank of the kernel matrix $K_n$ and the magnitude of the eigenvalues: the higher the eigenvalues, the lower the reconstruction error.  
    An arbitrary full-rank kernel matrix $K_n$ can reconstruct any learning target $y$ with $\err_{re} \approx 0$. Furthermore, the larger the eigenvalues, the smaller the reconstruction error $\err_{re}$. 
\end{corollary}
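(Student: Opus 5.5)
The plan is to obtain the corollary directly from \Cref{prop:mse} and \Cref{prop:recon}. The first claim needs a small observation. If $K_n$ is full rank, then $d_i \neq 0$ for every $i$. Since $K_n$ is a kernel (Gram) matrix, it is positive semidefinite, so in fact $d_i > 0$ for all $i$. The index set $\{i : d_i = 0\}$ is therefore empty. The hypothesis $\PP_{u_i}^y = 0$ for $i$ in that set then holds vacuously for \emph{every} target $y \in \R^n$. \Cref{prop:recon} now gives, for any $\varepsilon > 0$, a regularization parameter $z$ with $\err_{re} < \varepsilon$. This is the precise meaning of $\err_{re} \approx 0$.

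To make this quantitative, I will bound \cref{eq:loss} explicitly. Write $d_n = \min_i d_i > 0$. Then
\begin{equation*}
\err_{re} = \frac{1}{n}\sum_{i=1}^n \left(\frac{\PP_{u_i}^y\, z}{d_i+z}\right)^2 \le \frac{1}{n}\,\frac{z^2}{(d_n+z)^2}\sum_{i=1}^n (\PP_{u_i}^y)^2 = \frac{\|y\|^2}{n}\,\frac{z^2}{(d_n+z)^2}.
\end{equation*}
The last equality uses orthonormality of $\{u_i\}$, which gives $\sum_i (\PP_{u_i}^y)^2 = \|y\|^2$. Consequently, choosing $z \le d_n\sqrt{n\varepsilon}/\|y\|$ (any $z>0$ works when $y=0$) guarantees $\err_{re} < \varepsilon$. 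This bound is uniform over all $y$ of a given norm, which supports the word ``any'' in the statement.

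For the second claim, I will use the fact that each summand in \cref{eq:loss} is monotone. For fixed $z>0$ and fixed projection $\PP_{u_i}^y$, the map $d \mapsto \bigl(z\PP_{u_i}^y/(d+z)\bigr)^2$ is nonincreasing in $d>0$, and strictly decreasing whenever $\PP_{u_i}^y \neq 0$. So if the eigenvalues increase while the eigenvectors are held fixed, every term weakly decreases, and hence $\err_{re}$ decreases. The uniform bound above shows the same effect through $d_n$: a larger smallest eigenvalue gives a smaller worst-case error.

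The main obstacle is interpretive rather than technical. The phrase ``the larger the eigenvalues'' only makes sense if the comparison fixes the eigenvectors (or at least the projections $\PP_{u_i}^y$) and $z$; otherwise changing $K_n$ also changes the alignments. I would state this explicitly, for instance by comparing $K_n$ with $K_n' = U\Lambda' U^\top$ where $\Lambda' \succeq \Lambda$ entrywise. I would also remark that the first claim holds for every $y$, regardless of how $y$ relates to the data, which is the sense in which reconstruction is ``trivial''.
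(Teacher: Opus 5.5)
Your proposal is correct and follows the paper's route: full rank makes $\{i : d_i = 0\}$ empty, so \Cref{prop:recon} applies to every $y$, and the second claim comes from the monotonicity of each summand of \cref{eq:loss} in $d_i$. Your explicit bound $\err_{re} \le \frac{\|y\|^2}{n}\frac{z^2}{(d_n+z)^2}$, which uses positive semidefiniteness of the Gram matrix, and your caveat that the eigenvectors and $z$ must be held fixed sharpen the paper's one-line argument without changing it.
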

\begin{proof}
Due to \Cref{prop:recon}, we know that the full-rank kernel matrix can reconstruct any learning target $y$, since $d_i\neq 0, \forall i\in \{i: \; \PP_{u_i}^y \neq 0\}$.
It is also clear that larger $d_i$ will lead to smaller $\err_{re}$ based on \cref{eq:loss}.

\end{proof}

Before we attribute the above phenomenon to overfitting, we can make the following observations: 1) the inaccuracy in estimating the eigenvectors and eigenvalues from the sample does not necessarily result in poor reconstruction error;
2) as long as $d_i \gg z$, the corresponding contribution to the reconstruction error will remain near zero;
3) the eigencomponents corresponding to zero eigenvalues cannot be reconstructed.
% It is readily seen that if the target lives in the span of top eigenvectors, the variance terms in \cref{eq:bvdecomp} vanish. 
This demonstrates that reconstruction is relatively a trivial task; as long as the kernel matrix is of high rank, we can achieve near-zero reconstruction (training) error. The error is primarily a limitation imposed by the numerical stability associated with the use of small $z$. The triviality of reconstruction arises from the perfect alignment of the training and predicting kernels in the reconstruction process. 

As we show later, both theoretically and empirically, $\err_{re} \approx 0$
% The above corollary also suggests that,
in the KRR, or in kernel methods more generally, is not necessarily indicative of overfitting. The actual quality of generalization depends on more nuanced factors, which will be analyzed in the following section.

\subsection{Generalization Error and Predictive Eigenalignments}

The previous section showed that the training or reconstruction error in KRR is connected to eigenvalues and eigenvectors of the sample kernel $K_n$, as well as the alignment of these eigenvectors with the training targets. Next, we will demonstrate how the generalization error depends on how \emph{accurately} those sample eigenvalues and eigenvectors are estimated from a given finite data set.

For a finite sample size, not all eigenvectors and eigenvalues can be estimated equally well. It is known that with a finite sample size, the top eigenvalues and eigenvectors can be approximated more accurately in terms of relative error\footnote{The actual dynamics of estimation accuracy are more nuanced, but it is outside the scope of this paper.} (see also \eg \cite{mestre2008,loukas2017}). This suggests that there are two distinct sources of error: 1) a kind of ``bias'' due to the inability to represent the target solely by the informative eigenvectors (those with large eigenvalues, which are more easily estimated from finite data); and 2) a kind of ``variance'' due to the necessity of representing the target with the non-informative eigenvectors (those with small eigenvalues, which are not easily (or feasible) estimated from finite data).
% \textit{In contrast to the typical theory of the trade-off of bias and variance, there is no apparent trade-off in our interpretation: When the target is fixed, the decomposition is also fixed.}
\subsubsection{Eigen-characterization of generalization}

Generalization is the question of how well any specific data sample $\D$ of size $n$ can approximate other data samples $\tilde{\D}$ drawn from the same distribution. We derive a novel expression for the generalization error in terms of the relative errors of the eigenvalue and eigenvector estimates. 
Intuitively, we notice that the generalization behavior of a Kernel Ridge Regression (KRR) model depends on the extent to which the kernel matrix is affected by the samples used in the training process. If the predictions are highly dependent on volatile information, such as trailing eigenvectors, then the generalization error is likely to be large. To this end, we utilize existing tools in matrix analysis to examine how the different samples affect the eigenvectors. 
\textit{We first illustrate this idea non-rigorously.}
% Analogous to our treatment of reconstruction error above, we will use the sigular value decomposition of the $n_{test} \times n$ predictive kernel matrix $\tilde{K} := \tilde{V}\tilde{\Lambda}\tilde{U}^\top$ in our analysis: 
% \begin{align*}
%     \hat{\tilde{y}} &= \tilde{V}\tilde{\Lambda}\tilde{U}^\top \sum_{i=1}^n \frac{u_iu_i^\top y}{d_i + z} \\
%     &= \sum_{i=1}^n \frac{\PP_{u_i}^y(\tilde{V}\tilde{\Lambda}\tilde{U}^\top u_i)}{d_i + z} \\
%     &= \sum_{i=1}^n \frac{\PP_{u_i}^y(\sum_{j=1}^{n_{test}}\lambda_j\tilde{v}_j\tilde{u}_j^\top u_i)}{d_i + z} \\
%     &= \sum_{i=1}^n \frac{\PP_{u_i}^y(\sum_{j=1}^{n_{test}}\lambda_j\PP_{\tilde{u}_j}^{u_i}\tilde{v}_j)}{d_i + z} \\
%     &= \sum_{j=1}^{n_{test}} \lambda_j \tilde{v}_j \sum_{i=1}^n\frac{\PP^y_{u_i}\PP^{u_i}_{\tilde{u}_j}}{d_i + z} \num \label{eq:testpred}
% \end{align*}

% In test predictions, the predictions are weighted left singular vectors ($\tilde{v}_j$) where the weights are jointly determined by the alignment between $y$ and the eigenvectors $u_i$ ($\PP_{u_i}^y$), and the alignments between eigenvectors $u_i$ and right singular vectors $\tilde{u}_i$ ($\PP^{u_i}_{\tilde{u}_j}$). In the case where $\tilde{K} = K$ (i.e. where test data equals training data), \cref{eq:testpred} simplifies to the reconstruction formula \cref{eq:recon}.
For each individual test point $\tx_j$, we can write the prediction as
$$
\hat{\tilde{y}}_j = \sum_{i=1}^n\frac{\PP^y_{u_i}\PP^{u_i}_{\hkkx}}{d_i + z},
$$
where $\hkkx$ is the $j$th row of $\tilde{K}$ corresponding to the new data point $\tx_j$. To better understand predictive behavior, it is worth exploring how predictions for generalization differ from those for reconstruction.

To this end, we can assume that the vector $\hkkx$ comes from a row of the kernel matrix $\tilde{K}$ with the same training size, \ie $n\times n$ (this allows spectral decomposition as it pertains to the training kernel matrix). We also temporarily overlook the fact that the test example $\hkkx$ exists solely in the test kernel $\tilde{K}$, and we allow decomposition $\tilde{K} = \sum_{i=1}^n \tilde{d}_i \tilde{u}_i \tilde{u}_i^\top$. We have
\begin{align*}
    \hat{\tilde{y}}_j &= \sum_{i=1}^n\frac{\PP^y_{u_i}\PP^{u_i}_{\hkkx}}{d_i + z}\\
   &= \sum_{i=1}^n \frac{\PP^y_{u_i} [\sum_k^n \tilde{d}_k \tilde{u}_k \tilde{u}_k^\top u_i]_j }{d_i + z} \\
   &= \sum_{i=1}^n \frac{\PP^y_{u_i} [\tilde{d}_i\tilde{u}_i\tilde{u}_i^\top u_i+\sum_{k \neq i} \tilde{d}_k \tilde{u}_k \tilde{u}_k^\top u_i]_j }{d_i + z} \\
   % &= \sum_{i=1}^n \frac{\PP^y_{u_i} \tilde{d}_i \tilde{u}_i^{(j)}  }{d_i + z}\\
\end{align*}
where $[\cdot]_j$ denotes the selection of the $j$th element of the vector. We can then make several observations: 1) the sample size of the training set determines the size of the hypothetical kernel matrix for the testing data. This is consistent with the fact that the size of the testing data should not affect the test accuracy; 2) Small $\tilde{u}_k^\top u_i$ for $i \neq k$ and large $\tilde{u}_k^\top u_i$ for $i = k$ will lead to a correct prediction, as both are approaching their respective population counterparts.
% the large $\PP_{u_i}^y$ for the small $i$ will cause a smaller testing error. 
We see that to achieve accurate predictions in testing, it is not only important to have eigenalignments but also to have enough samples such that the eigenvector estimations remain stable and accurate when the current row in question is removed.

% Since the set of eigenvectors $\{u_i\}$ of the training data forms a basis, we can decompose $\kkx$
% Note that $\kkx$ is implicitly scaled by a singular value. The accurate prediction of KRR relies on the conditon that
% $$
% \PP_{\kkx}^{u_i} \approx d_i u_i^{(j)}.
% $$
% % Since eigenvectors are of norm 1, the prediction will be dominant by the top eigenvectors, unless the target cancels the top eigenvectors with $\PP_{u_i}^y$.
% Since training eigenvectors and testing eigenvectors can only align due to their tendencies to the population counterparts, we can assume that the target eigen alignments with population eigenvectors are known. Although we do not have this oracle information in practice, this will still provide us with insight into how the alignments will affect generalization in general. 
% % We show that a good generalization relies on the top eigenvectors of $\tilde{K}_{-k}$ aligned with $K$ of the training data.

\begin{remark}
    The fact that we have misaligned the information $\hkkx$ compared to the kernel $K_n$ imposes challenges in estimating the generalization error based solely on training or reconstruction performance. This also contributes to the effectiveness of the leave-one-out risk estimator (see \eg \cite{haghifam2022}); we also see this in \cite{jacot2020}, where the proposed theoretical risk estimator fails to outperform the leave-one-out risk estimator, which demonstrates the weakness of the analyses that overlook this fact.
\end{remark}
% To evaluate the generalization performance, we can consider the training set $\{E_n\}_n$ as an increasing sequence of sets, then the ground truth is the limit of predictive function over this sequence, the generalization error is the gap between the finite estimation of the limit and the limit itself. We show that when the learning targets align with the top eigenvectors, the model can lead to faster convergence, \ie better generalization. 
% \begin{align*}
% \tilde{y} &= \lim_{n\rightarrow \infty} (I + z L_{n, K}^{-1})^{-1} y\\
% &= \lim_{n\rightarrow \infty} \sum_{i=1}^n \frac{d_i}{d_i + z} \psi_{i,n} y
% \end{align*}

% $$
% \E(\tilde{y} - \hat{\tilde{y}}) = \int_y \sum_n^\infty  \frac{d_i}{d_i + z} \psi_{i,n} y d\PP(x,y)
% $$
% \begin{align*}
%     \hat{\tilde{y}}_j 
%     &= \sum_{i=1}^n \frac{\PP^y_{u_i} [\sum_k^n \tilde{d}_k \tilde{u}_k \tilde{u}_k^\top u_i]_j }{d_i + z} \\
%     &= \langle \tilde{\kk}_{\tx_j}, (K + zI_n)^{-1}y\rangle_{\hil_n}\\
%     % & = \int_{(x,y)} k(x, \cdot) 
% \end{align*}
\subsubsection{Generalization Bounds}
% Formally, we can characterize the generalization error using RKHS.
% Since we focus on the KRR, the approximation error cannot be reduced for a given task once the kernel function is determined. In the following, we analyze the estimation error, as it is more related to the eigen-alignment setting. In contrast to the classical approach, we fixed the data and the kernel matrix to study how the learning target affects the generalization behavior. 
% Whereas reconstruction error depended only on the training data and kernel, 
To analyze the generalization error, we restrict ourselves to considering the target function $f \in \hil$, where $\hil$ represents the Reproducing Kernel Hilbert Space (RKHS) corresponding to the Mercer kernel $\kfn$. 
Following \citet{cucker2007}, we can write
% \textcolor{red}{[TODO] fix this equation... is it a sum over eigenfns? a particular kernel index?}
$\kfn(x_j, x_k) = \sum_i \lambda_i \psi_i(x_j)\psi_i(x_k)$, where $\{\lambda_i, \psi_i\}$ represent the eigenvalues and eigenfunctions of a linear operator induced by $\kfn$. Since $\{\psi_i\}$ forms an orthonormal basis in $\hil$, we can express $f = \sum_i a_i\psi_i$.
For any finite training sample, we have a finite approximation $\hil_n$ of $\hil$, such that for every $g \in \hil$, we have $\lim_{n\rightarrow\infty} g_n = g$ for some $g_n \in \hil_n$. 
With a finite sample training set, under the previously utilized setup, we have $\hat{f}_n = (K_n + zI_n)^{-1}y \in \hil_n$. Although, technically, $\hat{f}_n$ is not the actual predictive function used to make predictions on new data points, it uniquely determines one. This formulation enables us to effectively derive the risk bound. As we increase the sample size $n$, we observe
$$
\lim_{n\rightarrow \infty} \hat{f}_n = f,
$$
where $f \in \hil$ corresponds to the ground truth generalization error minimizer.
Then, for the new test point $\tx_j$, let $\hat{\tilde{\kk}}_{\tx_j}$ be the row vector (of length $n$) of the kernel matrix $\tilde{K}$ that corresponds to $\tx_j$. Additionally, $\tilde{\kk}_{\tx_j}$ is the infinite-dimensional limit of $\hat{\tilde{\kk}}_{\tx_j}$ that lives in $\hil$. With a slight abuse of notation for $f$ and $\hat{f}_n$, we can express the prediction for $\tx_j$ as
$$
\ty_j = f(\tx_j) := \langle \tilde{\kk}_{\tx_j}, f\rangle_{\hil},\text{ and } \hat{\ty}_j = \hat{f}_n(\tx_j) := \langle \hat{\tilde{\kk}}_{\tx_j}, (K_n + zI_n)^{-1}y\rangle_{\hil_n} \, .
$$
Moreover, the predictive error is
$$
\hat{\tilde{y}}_j - \tilde{y}_j = \langle \hat{\tilde{\kk}}_{\tx_j}, (K_n + zI_n)^{-1}y\rangle_{\hil_n} - \langle \tilde{\kk}_{\tx_j}, f\rangle_{\hil}.\\
$$
% The convergence relies on both the convergence of $\tilde{\kk}_{\tx_j}$ and the convergence of $(K_n + z I)^{-1}y$. 
The accuracy of the prediction $\tilde{y}_j$ depends on $\hat{f}_n = (K_n + zI_n)^{-1}y \in \hil_n$ providing an accurate approximation of $f$, as well as $\hat{\tilde{\kk}}_{\tx_j}$ (the finite sample approximation), which must also yield an accurate approximation of the ground truth $\tilde{\kk}_{\tx_j}$. 
% Let $\{\lambda_i, \psi_i\}_i$ be the eigenvalues and eigenfunctions of the kernel $K$, assume that the ground truth $f \in \hil$ can be expressed as $f = \sum_i a_i\psi_i$. 

Our key idea is to compare the learned predictor $\hat{f}_n$ not directly to the ground-truth function $f$, but rather to a hypothetical best predictor, $f^*_n$ which is learned from only $n$ samples, where the samples themselves are selected to minimize the generalization error of $f^*_n$. That is, we define $f_n^*$ as the KRR solution to the hypothetical optimal data sample $\{x^*_1, \ldots, x^*_n\}$, with the values of $x^*$ selected to minimize generalization error.
% For a given sample size $n$, there is an optimal finite approximation $\hil_n^* \ni f^*_n = \sum_{i=1}^n \frac{\PP_{u^*_i}^y}{d^*_i + z}u^*_i$ of $f$, which can be expressed as the KRR solution given $n$ hypothetical optimal data points. 
Let $K^*_n$ be the kernel matrix for these $n$ optimal data points, and let $\{d^*_i, u^*_i\}$ be the eigenvalues and eigenvectors of $K_n^*$.

% In the following, it will be useful to compare the predictions of the optimal $f^*_n$ with the realized $\hat{f}_n$ for an actual data sample:
% \begin{align*}
% \hat{f}_n - f^*_n &= ((K_n + zI_n)^{-1}y  - f^*_n) \\
% &= \sum_{i=1}^n \left(\frac{u_{i}^\top y}{d_i + z} u_i - \frac{\PP_{u^*_i}^y}{d^*_i + z} u^*_i \right).
% % &= \sum_{i=1}^n \left(\frac{d_i}{d_i + z} (u_{i}^\top y) u_i - \frac{\PP_{u^*_i}^y}{d^*_i + z} u^*_i \right) - \left(\sum_{i=1}^n (\frac{\PP_{u^*_i}^y}{d^*_i + z} u^*_i - a\psi_i) - \sum_{i=n+1}^\infty a \psi_i\right)\\
% \end{align*}
% % \begin{align*}
% %     (\hat{f}_n - f^*_n)^\top(\hat{f}_n - f^*_n) &= \sum_{i=1}^n \left(\left(\frac{u_i^\top y \frac{\PP_{u^*_i}^y}{d^*_i + z}}{d_i+ z}\right)^2 - 2\frac{u_i^\top y \frac{\PP_{u^*_i}^y}{d^*_i + z}}{d_i+ z} u_i^\top u^*_i + \hat{a}^*^2_i  \right)  + \sum_{i \neq j} \left(\frac{u_i^\top y \hat{a}^*_j}{d_i+ z} u_i^\top u^*_j + \frac{u_j^\top y \frac{\PP_{u^*_i}^y}{d^*_i + z}}{d_j+ z} u^*^\top_i u_j\right)
% % \end{align*}
The optimal $\tilde{\kk}_{\tx_j}^*$ and the predictive $\hat{\tilde{\kk}}_{\tx_j}$ (utilizing an actual non-optimal training set) also have their representations with corresponding eigenvectors: 
$$
\tilde{\kk}_{\tx_j}^* = \sum_{i=1}^n \beta^*_i u^*_i \text{, and } \hat{\tilde{\kk}}_{\tx_j} = \sum_{i=1}^n \hat{\beta}_i u_i,
$$
recall that $\{d_i, u_i\}$ are the eigenpairs of the actual training kernel matrix $K_n$.
Since $\hat{\tilde{\kk}}_{\tx_j}$ depends on the arbitrary input $\tx_j$, which determines $\hat{\beta}_i$, an accurate value of $\hat{\beta}_i$ will result in an accurate prediction for $\tx_j$. To this end, we have
\begin{align*}
|\hat{\tilde{y}}_j - \tilde{y}_j | 
&= |\langle \hat{\tilde{\kk}}_{\tx_j}, (K_n + zI_n)^{-1}y\rangle_{\hil_n} - \langle \tilde{\kk}_{\tx_j}, f\rangle_{\hil}|\\
&= \left|\left(\langle \hat{\tilde{\kk}}_{\tx_j}, (K_n + zI_n)^{-1}y\rangle_{\hil_n} - \langle \tilde{\kk}^*_{\tx_j}, f^*_n\rangle_{\hil_n^*}\right) + \left(\langle \tilde{\kk}^*_{\tx_j}, f^*_n\rangle_{\hil^*_n} - \langle \tilde{\kk}_{\tx_j}, f\rangle_{\hil} \right)\right|\\
&= \left| \left( \left\langle \sum_{i=1}^n \hat{\beta}_iu_i, \sum_{i=1}^n \frac{\PP_{u_i}^y}{d_i + z} u_i \right\rangle - \left\langle \sum_{i=1}^n \beta^*_iu^*_i, \sum_{i=1}^n \frac{\PP_{u^*_i}^y}{d^*_i + z} u^*_i \right\rangle \right) + \left(\langle \tilde{\kk}^*_{\tx_j}, f^*_n\rangle_{\hil_n^*} - \langle \tilde{\kk}_{\tx_j}, f\rangle_{\hil} \right)\right|\\
&\leq \left| \sum_{i=1}^n \hat{\beta}_i \frac{\PP_{u_i}^y}{d_i + z} - \sum_{i=1}^n \frac{\PP_{u^*_i}^y}{d^*_i + z}\beta^*_i \right| + \left|\langle \tilde{\kk}^*_{\tx_j}, f^*_n\rangle_{\hil_n^*} - \langle \tilde{\kk}_{\tx_j}, f\rangle_{\hil} \right| \\
&\leq \underbrace{ \left| \sum_{i=1}^n  \hat\beta^*_i \left( \frac{\PP_{u_i}^y}{d_i + z} - \frac{\PP_{u^*_i}^y}{d^*_i + z}\right) \right| }_{\text{Term} 1}
+ \underbrace{\left|\sum_{i=1}^n \frac{\PP_{u^*_i}^y}{d^*_i + z}\Delta\beta_i\right| }_{\text{Term} 2}
+ \underbrace{\left|\langle \tilde{\kk}^*_{\tx_j}, f^*_n\rangle_{\hil^*_n} - \langle \tilde{\kk}_{\tx_j}, f\rangle_{\hil} \right|}_{\text{Term} 3}, \num\label{pf:1}
\end{align*}
where we assume the orthonormality of the eigenvectors and define $\Delta \beta_i = \hat\beta_i - \beta^*_i$. %We observe that $|\ty_j - \hat{\ty}_j|$ is minimized when $\PP^y_{u_i}$ is near zero, except for the top few eigenvectors. 

Bounding the generalization error amounts to bounding the first two terms in \cref{pf:1}, as the last term is fixed. To this end, we will consider the finite sample kernel matrix of the training dataset as a perturbed version of the optimal sample kernel matrix since our performance depends on the finite estimation of the eigenfunctions. We can then utilize the tools from matrix perturbation theory to develop our bound on \cref{pf:1}.

We first introduce a well-known lemma that will aid us in developing our results.
\begin{lemma}[Davis-Kahan \cite{davis1970, eldridge2017}]\label{lemma:davis}
    Suppose that the perturbated matrix $\tilde{X}= X + E$, where $X$ and $E$ are symmetric matrices, let $\delta_t = \min\{|\tilde{\lambda}_j - \lambda_t|: j \neq t\}$, then 
    $$
    \sin \theta_t \leq \frac{\|E\|}{\delta_t},
    $$
    where $E$ is the perturbation error, and $\theta_t = \cos^{-1}\left( |u_t^\top \tilde{u}_t| \right)$.
\end{lemma}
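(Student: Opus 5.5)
We prove the lemma directly by expanding the unperturbed eigenvector $u_t$ of $X$ in the eigenbasis of the perturbed matrix $\tilde{X}$. Since $\tilde{X}$ is symmetric, it has an orthonormal eigenbasis $\{\tilde{u}_j\}$ with eigenvalues $\{\tilde{\lambda}_j\}$. Write
$$
u_t = \sum_j c_j \tilde{u}_j, \qquad c_j = \tilde{u}_j^\top u_t, \qquad \sum_j c_j^2 = \|u_t\|^2 = 1 .
$$
By definition of $\theta_t$ we have $c_t^2 = \cos^2\theta_t$. Hence
$$
\sum_{j\neq t} c_j^2 = 1 - \cos^2\theta_t = \sin^2\theta_t .
$$
The whole task is therefore to bound the mass of $u_t$ that lies outside $\tilde{u}_t$.

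The key identity comes from combining the eigen-equation $X u_t = \lambda_t u_t$ with $\tilde{X} = X + E$:
$$
(\tilde{X} - \lambda_t I) u_t = (X + E - \lambda_t I) u_t = E u_t .
$$
We take squared norms of both sides. On the right, $\|E u_t\| \leq \|E\|\,\|u_t\| = \|E\|$ in operator norm. On the left, $\tilde{X} - \lambda_t I$ is diagonal in the basis $\{\tilde{u}_j\}$, so orthonormality gives
$$
\|(\tilde{X}-\lambda_t I)u_t\|^2 = \sum_j (\tilde{\lambda}_j - \lambda_t)^2 c_j^2 .
$$

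Next we discard the nonnegative $j=t$ term and use $|\tilde{\lambda}_j - \lambda_t| \geq \delta_t$ for every $j\neq t$:
$$
\|E\|^2 \;\geq\; \sum_{j\neq t} (\tilde{\lambda}_j - \lambda_t)^2 c_j^2 \;\geq\; \delta_t^2 \sum_{j\neq t} c_j^2 \;=\; \delta_t^2 \sin^2\theta_t .
$$
Taking square roots gives $\sin\theta_t \leq \|E\|/\delta_t$, since $\sin\theta_t \geq 0$ for $\theta_t\in[0,\pi/2]$. If $\delta_t = 0$ the bound is vacuous, so we may assume $\delta_t > 0$.

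There is no serious obstacle here. The one point that needs care is the definition of the gap: it must pair the \emph{unperturbed} eigenvalue $\lambda_t$ with the \emph{perturbed} eigenvalues $\tilde{\lambda}_j$, $j\neq t$, exactly as $\delta_t$ is defined in the statement. That pairing is forced by the fact that we apply $\tilde{X}-\lambda_t I$ to $u_t$. The absolute value in $\theta_t = \cos^{-1}(|u_t^\top\tilde{u}_t|)$ handles the sign ambiguity of eigenvectors automatically.
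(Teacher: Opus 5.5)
Your proof is correct: the identity $(\tilde{X}-\lambda_t I)u_t = E u_t$, expanded in the orthonormal eigenbasis of $\tilde{X}$, with the $j=t$ term dropped and the mixed gap $\delta_t$ applied to the remaining terms, gives exactly $\delta_t \sin\theta_t \leq \|E\|$. The paper does not prove this lemma; it only cites Davis--Kahan and Eldridge et al. Your argument is the single-vector case of the classical Davis--Kahan residual argument, so it is essentially the cited proof, written out in full.
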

It is clear that $\delta_t$ is large for isolated eigenvalues and small for trailing eigenvalues. While there are tighter bounds than \Cref{lemma:davis}, such as \cite{eldridge2017, fan2018}, these bounds are generally more involved in their formulation; for the brevity of the result, we focus on \Cref{lemma:davis} in our paper. 

We next demonstrate that, given a fixed sample kernel error $\Delta K$, aligning learning targets with the top eigenvectors will result in improved generalization performance. Since we are interested in the finite estimation of eigenfunctions, we assume that the sample kernel error is calculated with aligned eigenvectors, such that
    $$
    \|\Delta K\| = \min_Q\|  K_n  - QK_n^*Q^\top\|,
    $$
where $Q$ is the permutation matrix, and the norm is to be interpreted as the operator norm, which is the largest eigenvalue of a symmetric matrix. Here, we assume that the optimal kernel is constructed with similar structure, in terms of the observation order. \footnote{Admittedly, this is a strong assumption. However, our analysis can still show how much perturbation from the kernel matrix can contribute to the generalization error. } This is to say, with sufficiently large samples, our samples can roughly capture similar structures to that of the data. Alternatively, we consider our training set to be a perturbed version of an optimal training sample of a given size. Our analysis aims to demonstrate how the generalization performance is affected by the perturbations. 
% \textcolor{red}{[TODO] as defined, there is no guarantee that $\Delta K$ is small, since there were no constraints on the ordering of optimal data points. The solutions $\hat f$ and $f^*$ are of course insensitive to the data order, but the kernels themselves are not!}, 
To this end, for a fixed kernel difference $\Delta K$, we have
\begin{equation}\label{eq:Theta}
|u_i^\top u^*_i| \geq \sqrt{1 - \max\left\{1, \left(\frac{\|\Delta K\|}{\delta_i}\right)^2\right\}} =: \Theta_i
\end{equation}
since $u_i$ and $u^*_i$ are unit vectors, $\delta_i = \min\{|d_k - d^*_i|: i \neq k\}$. 

\subsubsection*{The upper bound of term 1 in \cref{pf:1}}

To bound term 1, 
% we first notice that we can write $\frac{\PP_{u^*_i}^y}{d^*_i + z} = \PP_{u^*_i}^y/(d^*_i + z)$, and both $\PP_{u_i}^y/(d_i + z)$ and $\frac{\PP_{u^*_i}^y}{d^*_i + z}$ are empirical estimations of $a_i$. 
suppose $d_i = d^*_i + \Delta d_i$,  we have
\begin{align*}
% \frac{\PP_{u_i}^y}{d_i + z} - \frac{\PP_{u^*_i}^y}{d^*_i + z} 
    % &= 
\frac{\PP_{u_i}^y}{d_i + z} - \frac{\PP_{u^*_i}^y}{d^*_i + z} 
    &= \frac{\PP^y_{u_i} (d^*_i + z) - \PP^y_{u^*_i}(d_i+z)}{(d_i + z)(d^*_i + z)} \\
    % &= \frac{\PP^y_{u_i} d^*_i - \PP^y_{u^*_i}d_i  + z(\PP^y_{u_i} - \PP^y_{u^*_i})}{(d_i + z)(d^*_i + z)} \\
    &= \frac{(d^*_i + z)( \PP^y_{u_i} - \PP^y_{u^*_i}) - \Delta d_i \PP_{u^*_i}^y}{(d_i + z)(d^*_i + z)} \\
    &= \frac{(d^*_i + z)\langle{u_i - u^*_i, y}\rangle - \Delta d_i \PP_{u^*_i}^y}{(d_i + z)(d^*_i + z)} \, . \num\label{eq:interm}\\
\end{align*}
To gain a sense of the concentration behavior of the above expression. We introduce the following lemma, which is modified from \cite{loukas2017}. 
\begin{lemma}[Theorem 3.2 of \cite{loukas2017}]\label{lemma:loukas}
    For Hermitian matrices $K$ and $\tilde{K} = \Delta K + K$, with eigenvectors $u_i$ and $\tilde{u}_j$ respectively, we have
    $$
    |\langle u_i, \tilde{u}_j\rangle| \leq \frac{2\|\Delta K\|}{|d_i - d_j|}
    $$
\end{lemma}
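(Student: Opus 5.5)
The approach is the standard ``Sylvester-type'' identity for eigenvector overlaps, combined with Weyl's inequality to move from the perturbed eigenvalue to the unperturbed one. The factor $2$ in the statement is exactly what a case split on the size of $\|\Delta K\|$ relative to the gap produces. Throughout, I read the statement as follows: $d_i$ and $d_j$ are eigenvalues of $K$, $u_i$ is the unit eigenvector of $K$ for $d_i$, and $\tilde u_j$ is the unit eigenvector of $\tilde K$ whose eigenvalue $\tilde d_j$ is the $j$th largest. I also assume $d_i \neq d_j$; otherwise the bound is vacuous.

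\textbf{Step 1: an exact identity for the overlap.} I would evaluate the scalar $u_i^\top \tilde K \tilde u_j$ in two ways. Since $\tilde K \tilde u_j = \tilde d_j \tilde u_j$, it equals $\tilde d_j \langle u_i, \tilde u_j\rangle$. On the other hand, $\tilde K = K + \Delta K$ and $K$ is symmetric with $u_i^\top K = d_i u_i^\top$, so it also equals $d_i\langle u_i,\tilde u_j\rangle + u_i^\top \Delta K \tilde u_j$. Subtracting the two expressions gives
\begin{equation*}
(\tilde d_j - d_i)\,\langle u_i,\tilde u_j\rangle = u_i^\top \Delta K\, \tilde u_j .
\end{equation*}
The right-hand side has absolute value at most $\|\Delta K\|$, because $u_i$ and $\tilde u_j$ are unit vectors. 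Hence $|\langle u_i,\tilde u_j\rangle| \le \|\Delta K\|/|\tilde d_j - d_i|$ whenever $\tilde d_j \ne d_i$.

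\textbf{Step 2: replace $\tilde d_j$ by $d_j$.} By Weyl's inequality, $|\tilde d_j - d_j| \le \|\Delta K\|$ for the ordered eigenvalues of Hermitian matrices. The triangle inequality then gives $|\tilde d_j - d_i| \ge |d_j - d_i| - \|\Delta K\|$. I would now split into two cases.
\begin{itemize}
\item If $\|\Delta K\| \le |d_i - d_j|/2$, then $|\tilde d_j - d_i| \ge |d_i - d_j|/2 > 0$. Step 1 then yields $|\langle u_i,\tilde u_j\rangle| \le 2\|\Delta K\|/|d_i-d_j|$.
\item If $\|\Delta K\| > |d_i-d_j|/2$, the right-hand side of the claim exceeds $1$. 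The bound then holds trivially, since $|\langle u_i,\tilde u_j\rangle|\le 1$ by Cauchy--Schwarz.
\end{itemize}

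\textbf{Main obstacle.} The only delicate point is the eigenvalue bookkeeping: the clean identity naturally involves the perturbed eigenvalue $\tilde d_j$, while the statement uses the unperturbed gap $|d_i-d_j|$. This mismatch is precisely where the constant $2$ comes from, via Weyl plus the case split. It is also worth noting that the argument needs no simplicity of the spectrum beyond $d_i\neq d_j$. Any choice of orthonormal eigenvectors within a repeated eigenspace works, because only the individual eigen-equations are used.
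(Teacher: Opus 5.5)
Your proof is correct, but it reaches the half-gap bound by a different route than the paper. The paper gives no self-contained proof: it imports Loukas's Theorem~3.2, whose argument starts from the same eigen-equation identity as your Step~1. It then gets $|\tilde d_j - d_i| > |d_i-d_j|/2$ from a \emph{sign condition} on the perturbed eigenvalue, $2\sign(\lambda_i-\lambda_j)\tilde\lambda_i > \sign(\lambda_i-\lambda_j)(\lambda_i+\lambda_j)$, which says the perturbed eigenvalue stays on the correct side of the midpoint. The paper simply assumes this condition, both in the text after the lemma and as a hypothesis of \Cref{thm:gen}. The operator norm enters only at the end, via $\|\Delta K u_i\|_2\le\|\Delta K\|$.

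You instead obtain the half-gap from Weyl's inequality when $\|\Delta K\|\le|d_i-d_j|/2$, and observe that the bound is trivial otherwise. This makes the statement unconditional. It also shows that, for the operator-norm version stated here, the sign condition is superfluous: whenever it fails, Weyl forces $\|\Delta K\|\ge|\tilde d_j-d_j|\ge|d_i-d_j|/2$, so the right-hand side is at least $1$.

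Loukas's route has its own advantage: a finer numerator, $\|\Delta K u\|_2$ rather than $\|\Delta K\|$, which is what makes his probabilistic (Chebyshev-type) bound possible. Your case split cannot be run with that numerator, because a small $\|\Delta K u\|_2$ does not control the eigenvalue shift $|\tilde d_j-d_j|$. Since the lemma as stated uses $\|\Delta K\|$, nothing is lost by your approach.
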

\begin{remark}
    Compared to the original Theorem 3.2 in \cite{loukas2017}, we have adapted the \Cref{lemma:loukas} with our notations and adjusted the right-hand side based on the simple fact that $\|\Delta K u_i\|_2 \leq \|\Delta K\|$.
\end{remark}
   % For any two eigenvectors $u_i$ and $\tilde{u}_j$ of kernel matrices from different data samples $X$ and $\tilde{X}$, respectively. Their eigenvalues are $\{\lambda_i\}_i$ and $\{\tilde{\lambda}_i\}_i$, respectively. For $\lambda_i \neq \lambda_j$ and any $\varepsilon > 0$, we have
   % \begin{equation}
   %      \PP(|\langle u_i, \tilde{u}_j \rangle| \geq \varepsilon) \leq \frac{4\E(\|\Delta K \|^2)}{\varepsilon^2(\lambda_i - \lambda_j)^2}
   % \end{equation}
   % holds for $2\sign(\lambda_i - \lambda_j)  \tilde{\lambda}_i > \sign(\lambda_i - \lambda_j)(\lambda_i + \lambda_j)$, where $\Delta K$ denotes the difference between two kernel matrices.  
   
% \end{lemma}
% \begin{proof}
%     We follow the same arguments as \cite{loukas2017}.  
%     % Consider  empirical kernel 
%     % $$
%     % \tilde{K} = \frac{1}{n}\sum_{i=1}^n \{k(x_{i,p}, x_{i,q})\}_{p, q=1}^m,
%     % $$
%     % where  
%     For eigenvectors $u_i$ and $\tilde{u}_j$ of K and $K + \Delta K$, respectively:
%     \begin{align*}
%         \PP(|\langle u_i, \tilde{u}_j\rangle| \geq \varepsilon) &= \PP(\langle u_i, \tilde{u}_j\rangle^2 \geq \varepsilon^2) \\
%         &\leq \frac{\E(\langle u_i, \tilde{u}_j\rangle^2)}{\varepsilon^2} \\
%         &\leq \frac{4\E(\|\Delta K \|^2)}{\varepsilon^2(\lambda_i - \lambda_j)^2}
%     \end{align*}
% where first equality is due to Markov's inequality, and for last inequality, according to \cite{loukas2017}, we have
% $$
% \E (\langle u_i,\tilde{u}_j\rangle^2) \leq \frac{4\E (\| \Delta K\|^2)}{(\lambda_i - \lambda_j)^2}.
% $$
% \end{proof}
The condition $2\sign(\lambda_i - \lambda_j)  \tilde{\lambda}_i > \sign(\lambda_i - \lambda_j)(\lambda_i + \lambda_j)$ is shown to hold asymptotically almost surely for large eigengaps and sufficiently large sample sizes~\cite{loukas2017}. \textit{We will assume this in our further analysis, as it further supports our claim that when the learning targets align with the trailing eigenvectors, errors cannot be bounded.}

To bound \cref{eq:interm}, we first observe that through straightforward algebra, we have
\begin{align*}
    u_i - u^*_i &= \sum_k^n \langle u_i, u^*_k\rangle u^*_k - u^*_i  \\
    &= \sum_{k: i \neq k} \langle u_i, u^*_k\rangle u^*_k + \langle u_i, u^*_i\rangle u^*_i - u^*_i, \num\label{eq:uu}
    % &\leq \sum_{k: i \neq k} \varepsilon_k u^*_k + \langle u_i, u^*_i\rangle u^*_i - u^*_i.
\end{align*}
additionally, we have
\begin{align*}
    \left|\hat\beta^*_i\left(\frac{\PP_{u_i}^y}{d_i + z} - \frac{\PP_{u^*_i}^y}{d^*_i + z}\right)\right| 
    &\leq |\hat\beta^*_i|  \left|   \frac{(d^*_i + z)( \PP^y_{u_i} - \PP^y_{u^*_i}) - \Delta d_i \PP_{u^*_i}^y}{(d_i + z)(d^*_i + z)}  \right|\\
    &= |b_i| (d_i + z)  \left| \frac{(d^*_i + z)( \PP^y_{u_i} - \PP^y_{u^*_i}) - \Delta d_i \PP_{u^*_i}^y}{(d_i + z)(d^*_i + z)} \right|\\
    &= |b_i| \left|  \frac{(d^*_i + z)( \PP^y_{u_i} - \PP^y_{u^*_i}) - \Delta d_i \PP_{u^*_i}^y}{d^*_i + z} \right|\\
    &= |b_i| \left| ((u_i^\top - u_i^{*\top})y) - \frac{\Delta d_i \PP_{u^*_i}^y}{d^*_i + z} \right|\\
    &\leq |b_i| \left( |(u_i^\top - u_i^{*\top})y| + \left|\frac{\Delta d_i \PP_{u^*_i}^y}{d^*_i + z} \right|\right),\\
\end{align*}
where $b_i := \hat{\beta}^*_i / (d_i+z)$. Plugging \cref{eq:uu} into the last line, we obtain
\begin{align*}
    \left|\hat\beta^*_i\left(\frac{\PP_{u_i}^y}{d_i + z} - \frac{\PP_{u^*_i}^y}{d^*_i + z}\right)\right| 
    &\leq |b_i| \left( \left|\left(\sum_{k: i \neq k} \langle u_i, u^*_k\rangle u^*_k + \langle u_i, u^*_i\rangle u^*_i - u^*_i\right)^\top y\right| + \left|\frac{\Delta d_i \PP_{u^*_i}^y}{d^*_i + z} \right|\right).\\
\end{align*}
% Since the only terms that matter are those with $\PP_{u^*_k}^y \neq 0$, Using \Cref{lemma:loukas}, with probability \mbox{$\prod_{k\neq i: \PP_{u^*_k}^y \neq 0}1-\frac{4\E(\|\Delta K \|^2)}{\varepsilon_k^2(d^*_i - d^*_k)^2}$}, we can further bound the above equations as follows:
Using \Cref{lemma:loukas}, we have
\begin{align*}
    \left|\hat\beta^*_i\left(\frac{\PP_{u_i}^y}{d_i + z} - \frac{\PP_{u^*_i}^y}{d^*_i + z}\right)\right| 
    &\leq  |b_i|\left(\left| \sum_{k:k\neq i} \frac{2\|\Delta K\|}{|d^*_i-d^*_k|} \PP_{u^*_k}^y\right| + \left| \langle u_i, u^*_i\rangle u_i^{*\top}y - u_i^{*\top}y \right| + \left| \frac{\Delta d_i \PP_{u^*_i}^y}{d^*_i + z} \right|\right)\\
    &\leq  |b_i|\left(\left|  \sum_{k:k\neq i} \frac{2\|\Delta K\|}{|d^*_i-d^*_k|} \PP_{u^*_k}^y\right| + \left| \PP_{u^*_i}^y (1 - \Theta_i) \right| + \left| \frac{\|\Delta K \| \PP_{u^*_i}^y}{d^*_i + z} \right| \right), \num\label{pf:2}\\
\end{align*}
where the last inequality is due to $|\Delta d_i| \leq \| \Delta K \|$, which follows from the definition of the operator norm, and \cref{eq:Theta}. Notice that the summation term contributes only when $\PP_{u_K^*}^y \neq 0$.

\subsubsection*{The upper bound of term 2 in \cref{pf:1}}

Next, we bound the term 2 on the right-hand side of \cref{pf:1}.
Since both $\tilde{\kk}_{\tx_j}^*$ and $\hat{\tilde{\kk}}_{\tx_j}$ are the $n$-dimensional projections of the infinite-dimensional counterpart $\tilde{\kk}_{\tx_j}$, we also have
\begin{align*}
    \left|\sum_{i=1}^n \frac{\PP_{u^*_i}^y}{d^*_i + z} \Delta\beta_i\right| &=\left|\sum_{i=1}^n \frac{\PP_{u^*_i}^y}{d^*_i + z} (\hat\beta^*_i - \beta^*_i)\right|\\
    &= \left|\sum_{i=1}^n \frac{\PP_{u^*_i}^y}{d^*_i + z} \left( \PP_{u_i}^{\hkkx} - \PP_{u^*_i}^{\kkx^*} \right)\right|\\
    &\leq \left|\sum_{i=1}^n \frac{\PP_{u^*_i}^y}{d^*_i + z} \left( \|\kk\| \sqrt{2- 2u_i^\top u^*_i} \right)\right|. \num\label{pf:3}
\end{align*}
where the last inequality is due to Cauchy-Schwarz, and $\| \kk\| := \max(\|\hkkx\|, \|\kkx^*\|)$. We summarize the analysis presented above in the following theorem.

\begin{theorem}\label{thm:gen}
    Using previously established notations and within the framework of Kernel Ridge Regression (KRR), consider a training set of size $n$ with the kernel matrix represented as $K_n$. define
    \begin{align*}
    \mathfrak{S}_j :=
      C_j\sum_{i=1}^n|\PP_{u^*_i}^y|\left(\sum_{k:k\neq i, \PP_{u_k^*}^y\neq 0} \frac{2\|\Delta K\|}{|d^*_i-d^*_k|}  +  (1 - \Theta_i) +  \frac{\|\Delta K\| +\sqrt{2- 2\Theta_i} }{d^*_i + z} \right) \num\label{eq:bound}\\
    \end{align*}   
    where $C_j$ is some constant dependent on $\tx_j$, and $\Delta K = K_n - K^*_n$ and $K^*_n$ is the optimal kernel matrix of size $n$, $\{d_i, u_i\}$ and $\{d^*_i, u^*_i\}$ are the eigenpairs of $K_n$ and $K_n^*$, respectively,  
    and $\Theta_i$ is defined in \cref{eq:Theta}. Assuming that $2\sign(d^*_i - d^*_j)  d_i > \sign(d^*_i - d^*_j)(d^*_i + d^*_j)$ holds, the expected squared generalization error
    % with probability \mbox{$\prod_{i: \PP_{u^*_i}^y \neq 0}1-\max_{k: k \neq i, \PP_{u^*_k}^y \neq 0}\frac{4\E(\|\Delta K \|^2)}{\varepsilon^2(d^*_i - d^*_k)^2}$}, 
    can be bounded as
    % such that
    \begin{equation}
    \E (\ty_j - \hat{\ty}_j)^2 \leq \left(\mathfrak{S}_j + \left|\langle \tilde{\kk}^*_{\tx_j}, f^*_n\rangle_{\hil_n} - \langle \tilde{\kk}_{\tx_j}, f\rangle_{\hil} \right| \right)^2. \label{eq:errbound}
    \end{equation}
\end{theorem}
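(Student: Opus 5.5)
The plan is to assemble the three-term decomposition in \cref{pf:1} with the bounds \cref{pf:2} and \cref{pf:3} already derived, and then pass from a pointwise bound on $|\ty_j-\hat{\ty}_j|$ to the squared expectation. Concretely, I start from \cref{pf:1}, which gives
$$|\hat{\ty}_j-\ty_j|\le \text{Term 1}+\text{Term 2}+\text{Term 3},$$
where Term 3 is exactly $\left|\langle \tilde{\kk}^*_{\tx_j}, f^*_n\rangle_{\hil_n} - \langle \tilde{\kk}_{\tx_j}, f\rangle_{\hil}\right|$, the quantity appearing in \cref{eq:errbound}. It therefore suffices to show $\text{Term 1}+\text{Term 2}\le\mathfrak{S}_j$ for a suitable $C_j$.

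For Term 1, I apply the triangle inequality over $i$ and use \cref{pf:2} termwise. This is where the hypothesis $2\sign(d^*_i - d^*_j)\, d_i > \sign(d^*_i - d^*_j)(d^*_i + d^*_j)$ is needed, since it licenses \Cref{lemma:loukas}. Two further inputs enter here. First, $|\Delta d_i|\le\|\Delta K\|$ holds by Weyl's inequality. Second, $|\langle u_i,u^*_i\rangle|\ge\Theta_i$ comes from \cref{eq:Theta} via \Cref{lemma:davis}. One caveat: the $\max$ in \cref{eq:Theta} should be read as $\min$, and $u_i$ should be sign-aligned so that $\langle u_i,u^*_i\rangle\ge 0$.

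Next I set $C_j\ge\max_i|b_i|=\max_i|\hat\beta^*_i|/(d_i+z)$. This ratio is finite, and it depends on the test point through $\hkkx$ and on the fixed spectrum. The cross sum $\bigl|\sum_{k\ne i}\tfrac{2\|\Delta K\|}{|d^*_i-d^*_k|}\PP^y_{u^*_k}\bigr|$ needs to be put into the form of \cref{eq:bound}. I do this by the triangle inequality, restricting to $k$ with $\PP^y_{u^*_k}\ne0$, and then exchanging the order of summation over $(i,k)$. This relabelling is legitimate because the gap weight is symmetric in $i,k$, so the weight $|\PP^y_{u^*_k}|$ may be moved onto the outer index. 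The remaining two pieces, $|\PP^y_{u^*_i}|(1-\Theta_i)$ and $|\PP^y_{u^*_i}|\,\|\Delta K\|/(d^*_i+z)$, already match \cref{eq:bound}.

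For Term 2, I use \cref{pf:3}. Bounding $\sqrt{2-2u_i^\top u^*_i}\le\sqrt{2-2\Theta_i}$ and absorbing $\|\kk\|$ into $C_j$ (so $C_j\ge\max(\|\kk\|,\max_i|b_i|)$) yields the term $|\PP^y_{u^*_i}|\sqrt{2-2\Theta_i}/(d^*_i+z)$. Adding the Term 1 and Term 2 bounds gives $|\ty_j-\hat{\ty}_j|\le\mathfrak{S}_j+\text{Term 3}$.

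Finally, the right-hand side is deterministic given the training set, $K^*_n$ and $\tx_j$. Both sides are nonnegative, so I square and take expectations to obtain \cref{eq:errbound}. If the expectation is over $\tx_j$, then $C_j$ should be replaced by its supremum, or the bound read conditionally on $\tx_j$.

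I expect the main obstacle to be the step that converts the Davis--Kahan/Loukas estimates into the exact form of the cross term. The lemmas are stated with gaps $|d_i-d^*_k|$, that is, between a perturbed and an unperturbed eigenvalue, whereas \cref{eq:bound} uses $|d^*_i-d^*_k|$. I would first check whether \Cref{lemma:loukas} can be applied with the roles of $K_n$ and $K^*_n$ chosen so the unperturbed gap appears. Failing that, I would absorb a factor of $2$ into $C_j$ under the assumed eigenvalue-ordering condition, which keeps $d_i$ on the correct side of the midpoint between $d^*_i$ and $d^*_k$.
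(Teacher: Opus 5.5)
Your skeleton matches the paper's: Term 1 via \cref{pf:2} summand by summand, Term 2 via \cref{pf:3}, $\max_i|b_i|$ and $\|\kk\|$ absorbed into $C_j$, then square. Your corrections are correct: $\min$ rather than $\max$ in \cref{eq:Theta}, sign-aligning $u_i$ with $u_i^*$, and reading the bound conditionally on $\tx_j$. The obstacle you flag at the end is already handled by the hypothesis. From $(d_i-d_k^*)\langle u_i,u_k^*\rangle=\langle u_i,\Delta K\,u_k^*\rangle$, and the sign condition placing $d_i$ on the $d_i^*$ side of the midpoint of $d_i^*$ and $d_k^*$, you get $|d_i-d_k^*|>|d_i^*-d_k^*|/2$. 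That is exactly the factor $2$ in \Cref{lemma:loukas}.

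The step that fails is the reshaping of the cross term. Write $w_{ik}:=2\|\Delta K\|/|d_i^*-d_k^*|$. What you have is $C_j\sum_{i}\sum_{k\ne i,\,\PP^y_{u_k^*}\ne0}w_{ik}|\PP^y_{u_k^*}|$, with the outer index running over \emph{all} $i$, because $b_i$ does not vanish when $\PP^y_{u_i^*}=0$. Relabelling $i\leftrightarrow k$ moves the restriction onto the outer index, where it is vacuous because of the factor $|\PP^y_{u_i^*}|$. The inner sum then runs over \emph{all} $k\ne i$, so you get $C_j\sum_i|\PP^y_{u_i^*}|\sum_{k\ne i}w_{ik}$. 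This exceeds the cross term of \cref{eq:bound} by the pairs with $\PP^y_{u_i^*}=0\ne\PP^y_{u_k^*}$.

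Those pairs are real. For $y=u_1^*$ the restricted cross sum in \cref{eq:bound} vanishes identically. Yet Term 1 still contains $\sum_{i\ne1}b_i\langle u_i,u_1^*\rangle$, generically of first order in $\|\Delta K\|$, so it must be paid for by some other term. The paper's proof makes the same jump without comment, so you inherit its gap; your symmetry argument just does not close it.

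To close it, split the outer sum.
\begin{itemize}
\item For $i$ with $\PP^y_{u_i^*}\ne0$, only pairs with both projections nonzero occur. That index set is symmetric, and your exchange then gives exactly the restricted sum.
\item For $i$ with $\PP^y_{u_i^*}=0$, the $i$th summand of Term 1 is $b_i\,u_i^\top y=b_i\sum_{k:\PP^y_{u_k^*}\ne0}\langle u_i,u_k^*\rangle\PP^y_{u_k^*}$. Instead of \Cref{lemma:loukas}, use Parseval and Cauchy--Schwarz: for each $k$ with $\PP^y_{u_k^*}\ne0$,
$\sum_{i:\PP^y_{u_i^*}=0}|\langle u_i,u_k^*\rangle|\le\sqrt n\,(1-\langle u_k,u_k^*\rangle^2)^{1/2}\le\sqrt n\,\sqrt{2-2\Theta_k}$.
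Then charge the result to the $\sqrt{2-2\Theta_k}/(d_k^*+z)$ term after adding $\sqrt n\,\max_i|b_i|\,\max_k(d_k^*+z)$ to $C_j$.
\end{itemize}
This is admissible only because the statement leaves $C_j$ unspecified. With the paper's $C_j=\max\{b_1,\dots,b_n,\|\kk\|\}$, your argument proves the bound with the unrestricted inner sum instead.

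Separately, \cref{pf:3}, which both you and the paper take as given, tacitly identifies $\hkkx$ with $\kkx^*$. Without that identification, Cauchy--Schwarz leaves an additional $\|\hkkx-\kkx^*\|$ term in Term 2.
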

\begin{proof}
Let the estimation error
$$
\err_{es}(\tx_j) := |\ty_j - \hat{\ty}_j| + \left|\langle \tilde{\kk}^*_{\tx_j}, f^*_n\rangle_{\hil_n} - \langle \tilde{\kk}_{\tx_j}, f\rangle_{\hil} \right|. 
$$
We then combine \cref{pf:1}, \cref{pf:2}, and \cref{pf:3} to obtain (note that $\hat\beta^*_i$ and $\beta^*_i$ are dependent on $\tx_j$).
\begin{align*}
    \err_{es}(\tx_j) &\leq \left| \sum_{i=1}^n  \hat\beta^*_i \left( \frac{\PP_{u_i}^y}{d_i + z} - \frac{\PP_{u^*_i}^y}{d^*_i + z}\right) \right| + \left|\sum_i \frac{\PP_{u^*_i}^y}{d^*_i + z}\Delta\beta_i\right| \\
    % &\leq \sum_{i=1}^n |b_i| \left(\|y\|\sqrt{2-2\Theta_i} + \frac{|\PP_{u^*_i}^y|\|\Delta K\|}{|\hat{\lambda}_i| +z}\right) + \left|\sum_{i=1}^n \hat a_i \left( \|\kkx\| \sqrt{2- 2u_i^\top u^*_i} \right)\right|  \\
    &\leq \sum_{i=1}^n |b_i| \left((u_i^\top - u_i^{*\top})y + \frac{|\PP_{u^*_i}^y|\|\Delta K\|}{d_i^* +z}\right) + \sum_{i=1}^n \frac{|\PP_{u^*_i}^y|}{d^*_i + z} \left( \|\kk\| \sqrt{2- 2\Theta_i} \right)  \\
    &\leq \sum_{i=1}^n|b_i|\left(\left|  \sum_{k:k\neq i} \frac{2\|\Delta K\|}{|d^*_i-d^*_k|} \PP_{u^*_k}^y \right| + \left| \PP_{u^*_i}^y (1 - \Theta_i) \right| + \left| \frac{\|\Delta K\| \PP_{u^*_i}^y}{d^*_i + z} \right| \right) + \sum_{i=1}^n \frac{|\PP_{u^*_i}^y|}{d^*_i + z} \left( \|\kk\| \sqrt{2- 2\Theta_i} \right)  \\
    % &\leq \sum_{i=1}^n|b_i|\left(\left|\sum_{k: k \neq i}\PP_{u^*_k}^y \varepsilon \right| + \left| \PP_{u^*_i}^y (1 - \Theta_i) \right| + \left| \frac{\|\Delta K\| \PP_{u^*_i}^y}{d^*_i + z} \right| \right) + \sum_{i=1}^n \frac{|\PP_{u^*_i}^y|}{d^*_i + z} \left( \|\kk\| \sqrt{2- 2\Theta_i} \right)  \\
    &\leq C_j\sum_{i=1}^n|\PP_{u^*_i}^y|\left( \sum_{k:k\neq i, \PP_{u_k^*}^y\neq 0}\frac{2\|\Delta K\|}{|d^*_i-d^*_k|} + (1 - \Theta_i) + \frac{\|\Delta K\| +\sqrt{2- 2\Theta_i} }{d^*_i + z} \right),\\
\end{align*}     
where in the last inequality, we denote 
% $\sum_i^n \sum_{k\neq i} |\PP_{u^*_k}^y \varepsilon| \leq \sum_i^n n\varepsilon |\PP_{u^*_i}^y|$, and 
$C_j := \max\{b_1, \cdots, b_n, \|\kk\|\}$. 
% Since $\varepsilon$ measures the misalignment of the eigenvectors, we only care for it when $\PP_{u^*_i}^y \neq 0$, \ie it measures how much we mistake the wrong eigenvectors for the correct ones, and we assume the uniform bound of $\varepsilon$ for the simplicity of the formulation. 
\end{proof}
\begin{remark}
    Although the \cref{eq:bound} might seem complex, its structure is actually quite simple: each term is essentially the projection of the learning targets multiplied by the corresponding estimation error for that eigenvector. We will verify this empirically in \Cref{ex:errorpereigen}: not only does the generalization error increase with the number of non-zero terms in \cref{eq:bound}, but the magnitude and volatility of the generalization error also increase from the top eigenvectors to the trailing ones. 
\end{remark}

\Cref{thm:gen} shows a given fixed $\|\Delta K\|$, which measures the difference between various kernel matrices formed by each sample of fixed size $n$. This difference is usually determined by the nature of the data obtained from the sampling process. Our results suggest the following: \textit{1) given the difference between empirical kernel matrices, the learning targets aligned with the top eigenvectors can generalize better; 2) alignments with eigenvectors corresponding to isolated eigenvalues can further help with generalization.} The latter observation also coincides with the empirical findings from \cite{martin2020}, which the authors noticed that well-generalized neural networks have weights that have isolated eigenvalues. 

% \imp{remark on regularization, z is the same, high z reduce the difference between kernels, the bound is augmented for large z}
\begin{remark}
    \Cref{thm:gen} assumes the same regularization between $K_n$ and $K_n^*$. We notice that by increasing the regularization parameter $z$, the differences among various kernels will diminish. Additionally, a large $z$ will cause the generalization error to dominate the approximation error. Such a large regularization will cause $K_n$ to behave like an identity matrix, which can lead to near-zero reconstruction but catastrophic generalization, as shown in \Cref{sec:rege}.
\end{remark}
% \begin{theorem}
%     Under KRR assumptions, if the sample kernel error w.r.t the optimal sample $\Delta K_n = K_n - K_n^*$, where $K_n^*$ is the optimal sample of size $n$. 
%     \imp{error in terms of $\Delta K$?}
% \end{theorem}
    
\begin{remark}
Since the top eigenvectors and eigenvalues play a crucial role in predictions, and the sample top eigenvalues and eigenvectors are known to be inconsistent, it is possible to use shrinkage functions to reduce the estimation error of the kernel matrix. The shrinkage estimation for the kernel matrix has been proposed by~\cite{lancewicki2019}.
\end{remark}

\section{Related Work}

\citet{cristianini2001} proposed the idea of kernel alignments, suggesting that when the kernel aligns with the labels, the model will have better generalization performance. 
\citet{jacot2020} shows that kernel alignments provide critical insights into the expected risk of the model. In particular, they proposed the concept of a signal capture threshold and demonstrate the relationship between training error and expected risk in closed form. Furthermore, while \cite{jacot2020} did not make assumptions about the data, they made an implicit Gaussian assumption for the sampling operator. 
\citet{simon2021} also proposes a similar eigenlearning framework and introduces the concept of a conservative law of learning, suggesting that the learnability of the model is equal to the number of observations. 
\citet{canatar2020} utilize replica methods to study the spectral bias of kernel models and their application in neural networks. \cite{canatar2023} extends the spectral theory on kernel models to study the connection between the encoding models of neural responses and the geometric properties of deep neural network (DNN) representations and neural activities. 
Also utilize replica methods, \citet{bordelon2020} derive the generalization bounds for kernel regression as a function of the number of training samples.

The literature also examines the generalization of neural networks through kernel alignments.
\citet{canatar2020} analyze the spectral bias of neural networks and demonstrate that the neural network training process can be decomposed using the eigenfunctions of the neural tangent kernel. \cite{canatar2020, simon2021, bordelon2020} also connects the kernel alignments to the neural network through the wide neural network limits and the neural tangent kernel. \citet{wei2022} shows that the classical generalized cross-validation estimator can be extended to the overparameterized regime, such as in neural networks, through the framework of random matrix theory. 

Recent studies, \eg \cite{hastie2022, liang2018}, show that high-dimensional ridgeless regression models can interpolate while still exhibiting good generalization. \citet{hastie2022} also shows that the optimal ridge/regularization also vanishes in this regime, which is similar to our observations. 

Our analysis focuses on the relationship between generalization and the estimation of eigenvalues and eigenvectors. Many efforts have been put into studying the eigenvalue and eigenvectors of the covariance matrix, \eg \cite{loukas2017, silverstein1990, mestre2008aa}, \textit{etc.}

\section{Experiments}

In this section, we demonstrate our main findings through empirical experiments. 
\subsection{Eigen Alignments}\label{sec:exp1}
\subsubsection{Simulated Data}
\begin{figure}[ht]
    \centering
    \includegraphics[width=0.75\linewidth]{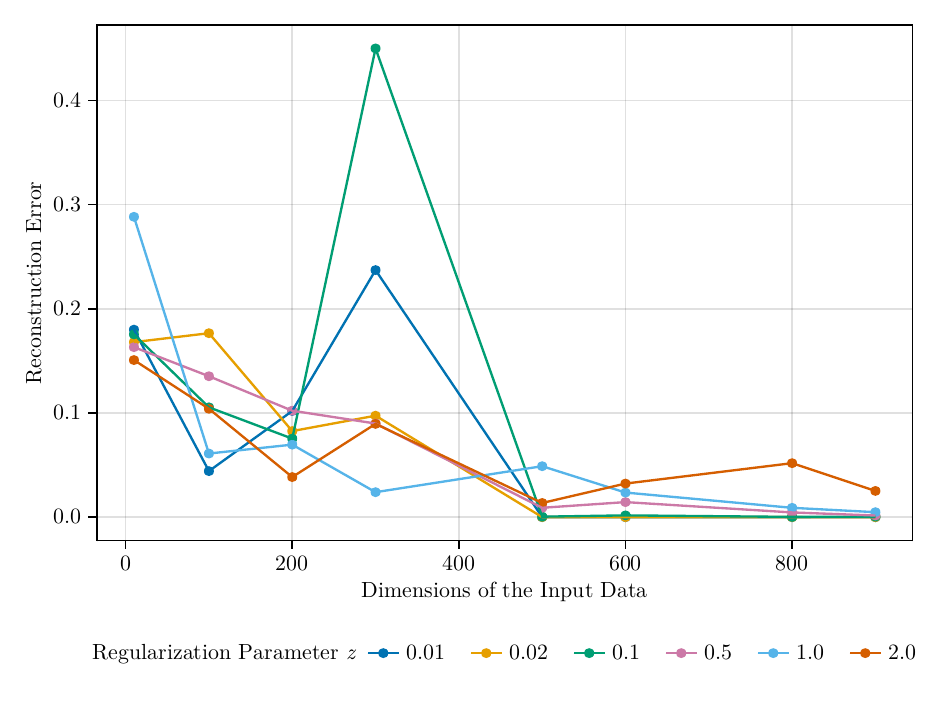}
    \caption{Random data reconstruction error. We observe the following: 1) high reconstruction errors only appear when the dimension is not high enough (\ie less than 500) for small regularizations; 2) when dimension $n = 600$, only the three highest regularization cases have reconstruction error significantly larger than zero. }
    \label{fig:dim_z}
\end{figure}
% \begin{figure}[ht]
%     \centering
%     \includegraphics[width=0.8\linewidth]{plots/align_top_trailing.pdf}
%     \caption{Training and testing error comparison for different learning targets. The number on the right margin indicates the regularization parameter. We notice that for both training and testing, when the learning targets are aligned with top eigenvectors, the model obtains small error on both training and testing, implying that the generalization is more relevant to eigenalignment than training or reconstruction error. Additionally, we notice that regularization parameter is more relevant to trailing aligned targets; For top-aligned targets, when larger regularization ($z \geq 0.5$), can still slightly reduce test error, but at the highest dimension, the larger regularizations make test errors larger.}
%     \label{fig:align_top_trailing}
% \end{figure}

% We empirically demonstrate the eigen alignment phenomenon: utilizing the synthesized targets, we show that test error increases when the learning targets are aligned with trailing eigenvectors, despite the nearly perfect reconstruction error. 
We first demonstrate that reconstruction errors in high dimensions can be arbitrarily small, as suggested in \Cref{prop:recon}, with any bias attributable to the regularization parameter $z$. To this end, we randomly sample input data $X$ with a dimension of $p\times 1000$. We fix the number of observations $n = 1000$, and vary the dimension of the input data $X$; specifically, $p = 10, 100, 200, 300, 500, 600, 800, 900$. For the learning objectives $y$, we utilize the normalized weighted sum, with random weights, of $1st, 2nd, 100th, 400th, 500th$ eigenvectors of the linear kernel matrix $X^\top x$. We calculated the sum of the squared reconstruction error 
$$
n\err_{re} = \sum_i (\hat{y}_i - y_j)^2.
$$
as our measurement.
We plot the reconstruction errors in \Cref{fig:dim_z}. It shows that as the dimension increases, we can achieve a near-zero reconstruction error, influenced by the bias introduced by the regularization parameter $z$. Notice the sharp decline that occurs after the dimension of the input data $p \geq 500$. In the case of our choice $y$, since it relies on the trailing eigenvectors, it is unlikely to generalize well. However, a zero reconstruction error does not necessarily indicate poor generalization or overfitting, as we demonstrate in the following sections.

% Next, we demonstrate that, provided that the data align with the top eigenvectors, good generalization can still be achieved even with near-zero reconstruction errors. We continue to utilize randomly generated data as previously. We keep the number of observations at $1000$ in different runs and ensure that the data remain the same across these runs as well. For training, we first take $700$ observations, reserving the rest for testing purposes. The results are plotted in \Cref{fig:align_top_trailing}. We clearly show that a small training error, even close to zero, does not indicate overfitting. Furthermore, when learning targets are aligned with the top eigenvectors, the model obtains smaller errors in both training and testing. Additionally, we notice that for the learning targets that are aligned with top eigenvectors, the regularization exhibits little effect. In particular, \textit{we observe that, in high dimensions, regularization can adversely affect generalization performance: } for top-aligned targets, when $n = 600$, larger regularization ($z \geq 0.5$) may still slightly reduce test error. However, in the highest dimensions, increased regularization results in larger test errors.
\begin{figure}[!ht]
    \centering
    \includegraphics[width=0.75\linewidth]{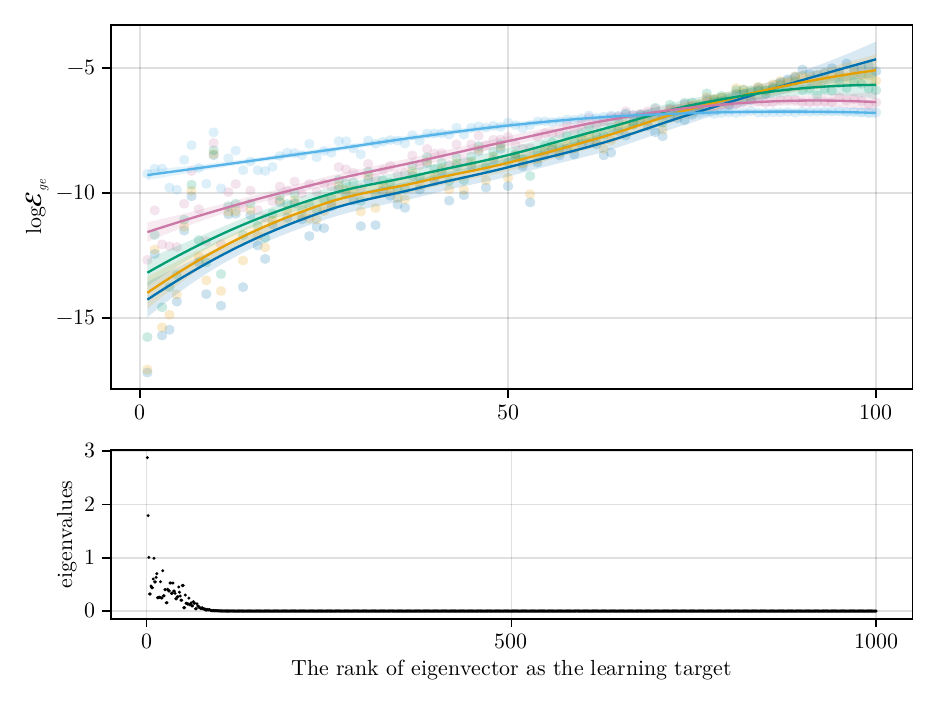}
    \caption{Learning targets of the task affect the generalization performance. Learning targets aligned with top eigenvectors (lower rank) tend to have lower generalization errors. Regularization tends to have a negative impact on generalization error for most learning targets, except those aligned with trailing ones (high-rank eigenvector). Additionally, eigen-gaps are largest }
    \label{fig:reg}
\end{figure}

To further demonstrate the effects of regularization, we randomly sample 1000 data points from a uniform distribution from $(-3, 3)$, \ie each data point is scalar in $(-3, 3)$. Then, we use an RBF kernel to construct the kernel matrix and calculate the eigenvalues and eigenvectors. For training purposes, we randomly sampled 100 data points from the 1000 available data points. We use each eigenvector we calculated (also sampled to 100 points) as the learning targets for training.  
To see how both learning targets and regularization affect generalization, we used trained models to predict the entire data set and calculate the generalization error. \Cref{fig:reg} shows the results of this experiment. To better present the difference, we plot the generalization error in logarithmic scale. It is clear from \Cref{fig:reg}, that the generalization error increases as the rank increases, \ie that the learning targets align with the trailing eigenvectors. Additionally, we notice that for most of the learning targets, the larger the regularization, the larger the generalization error. This trend only reverses towards the trailing eigenvectors. At the bottom of \Cref{fig:reg}, we also show that large eigen gaps also correspond to the smaller generalization errors.

Our experiments suggest that regularization is more effective when learning targets are aligned with trailing eigenvectors. This coincides with our argument in the previous section: if the regularization is effective, the model is already in the unstable estimation regime. This is particularly true in higher dimensions, as illustrated in the following section, specifically, in \Cref{fig:multi}. 

\subsubsection{Real Datasets}

Our experiments in this section utilize more realistic data to demonstrate our results, which are also applicable to real-world scenarios. 
For this experiment, we sampled 1000 images from the MNIST dataset, utilizing only those labeled 4 and 9. Despite the perceived simplicity, these data are actually of high dimensionality. We constructed a learning target based on the eigenvectors of the kernel matrix using a squared inner product kernel $\kfn(x_i, x_j) = (x_i^\top x_j)^2$. For comparison, we construct two sets of learning targets: one aligned with the top eigenvectors, $y_1 = u_3 - u_5$, where $u_1$ and $u_3$ are $3rd$ and $5th$ eigenvectors, respectively; and one aligned with the trailing eigenvectors, $y_2 = u_{100} + u_{101}$. The results are shown in \Cref{fig:eigentarget}.

\begin{figure}[ht]
    \centering
    \includegraphics[width=0.95\linewidth]{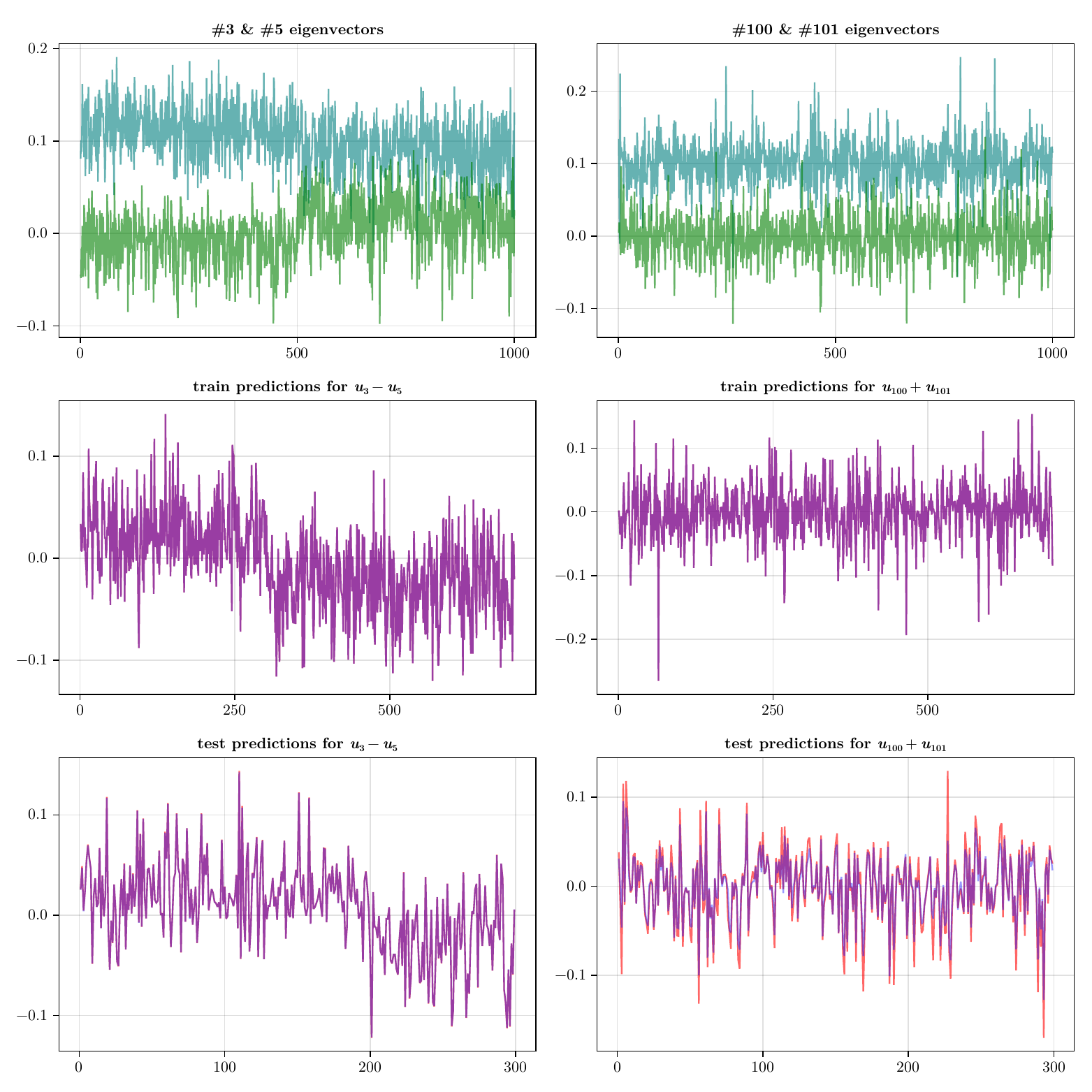}
    \caption{With MNIST images data (labeled 4 and 9), we construct targets with top and trailing eigenvectors. On the left, we see both training and testing have near-perfect performance (the blue prediction line overlaps with the red ground truth). On the right, we notice that although the training error is near zero, \ie near-perfect overlapping of red and blue lines, the testing error is significant.}
    \label{fig:eigentarget}
\end{figure}

On the left of \Cref{fig:eigentarget}, we observe that both training and testing error are quite low; this is shown by noticing ground truth $y_1$ and predictions $\hat{y}_1$ are almost perfectly overlapped.  In the right column of \Cref{fig:eigentarget}, we observe that despite the almost perfect overlap of the ground truth and the prediction for training, the performance on the tests is quite poor. These results show that alignment of the top eigenvectors is the key to good generalization, and more importantly, in a nonparametric regime, the training error can be quite uninformative regarding the test performance. 

To further evaluate this phenomenon, we compare the training errors and testing errors with an additional three datasets: CIFAR10, FashionMNIST, and SVHN2. To ease the computational burden, we subset data by selecting 2 of the available labels, and we also limit the size of training samples to 1000 for each dataset. We evenly chose 10 eigenvectors and multiplied them by 10 (to increase the magnitude) as the learning target for each run and computed the training and testing errors. The results are shown in \Cref{fig:multi}.  
\begin{figure}[ht]
    \centering
    \includegraphics[width=0.85\linewidth]{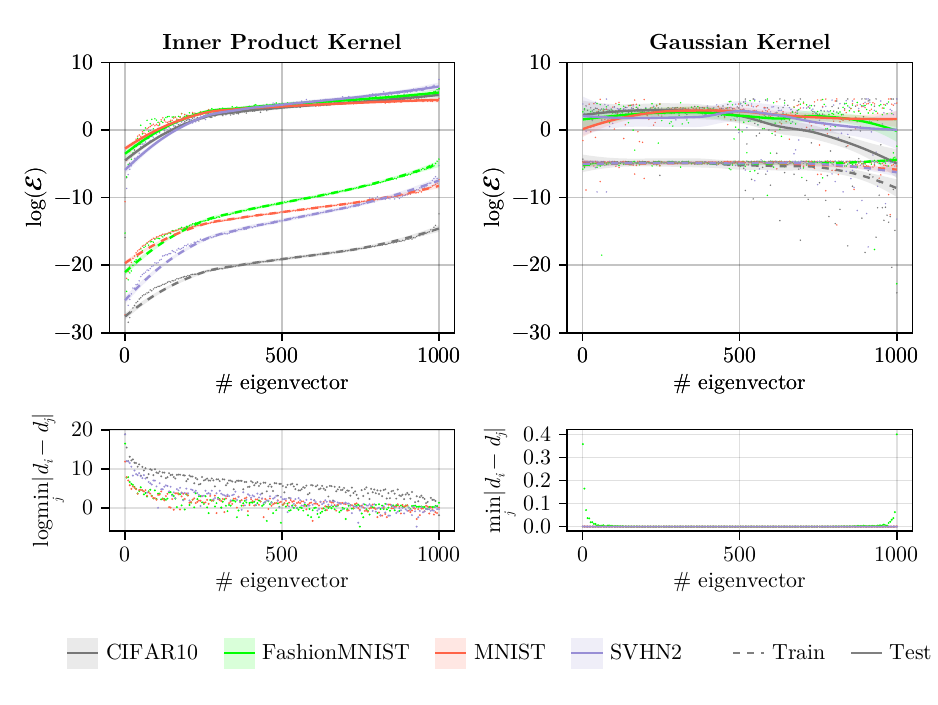}
    \caption{Training error and testing error comparison for different kernel types and eigenvectors as the learning targets. The Gaussian kernel showing a wider spread implies greater variance due to almost uniform eigenvalue distributions. Both kernels exhibit a correlation between the generalization error and the eigengaps.}
    \label{fig:multi}
\end{figure}
In addition to the inner product kernel, we also utilized a Gaussian kernel to compare empirical behaviors of KRR. Specifically, we observe that the eigenvalue distribution for a Gaussian kernel is almost uniform, indicated by the small eigengaps $\min_j |d_i - d_j|$ for all eigenvalues. This is expected since the $L_2$-norm-based kernel has an asymptotic rank of one when excluding the diagonal. As our analysis shows, the small eigengaps cause significant prediction variance. Since the learning targets are small in norm, the model can accidentally achieve a small error in the testing set by simply making small predictions. Meanwhile, the inner product kernel shows much more stable prediction; additionally, as the eigengaps decrease, the generalization errors increase. 

In both cases, the training errors are relatively small across learning targets. Since the trailing eigenvectors are smaller, the bias caused by the regularizations is also more significant in the trailing eigenvectors.  We see that despite the training error gradually increasing as the learning targets move to the tail of the eigenspectrum, they remain close to zero. As analyzed in \Cref{prop:recon}, the training error in non-parametric KRR can be arbitrarily small, up to regularization effects (mainly the relative magnitude of eigenvalues compared to the regularizer) and numerical error. The testing errors, however, increase significantly for the trailing eigenvectors. 

\begin{remark}
We notice that in both \Cref{fig:reg} and left-hand sides of \Cref{fig:multi} the generalization error exhibit similar trends, despite the fact that the model in \Cref{fig:reg} also uses a Gaussian kernel. However, since it is in low dimension (dim = 1 in \Cref{fig:reg}), the Gaussian kernel matrix has a more non-trivial eigenvalue distribution: after subtracting the diagonal, the kernel matrix in \Cref{fig:reg} actually has a higher rank than the Gaussian kernel matrices in \Cref{fig:multi}. Interestingly, the eigenvalue distributions of a Gaussian kernel for low-rank data actually resemble the eigenvalue distributions of inner product kernels, which have a quick decay in the magnitude of eigenvalues. This is aligned well with our theory, which identifies large eigengaps as a factor that can lead to better generalization. %Additionally, this shows that although our theory's primary focus is on high-dimensional data and inner product kernels, it also applies well to low-dimensional data with a Gaussian kernel where it still works. 

Finally, we also observe the similar ``flipping" effects on both \Cref{fig:reg} and the left-hand sides of \Cref{fig:multi}: the regularization that causes the most bias in the top eigenvector regime, leading to the most improvement in generalization in the trailing eigenvector regime. This further demonstrates the limited effects of regularization, such that regularization is most likely to be beneficial only when the learning targets are aligned with trailing eigenvectors. 
\end{remark}

\subsubsection{Number of Eigenvectors Aligned with the Targets}\label{ex:errorpereigen}

As our main results suggest in \Cref{thm:gen}, the generalization error will increase as the number of eigenvectors aligned with the learning target increases. This occurs because every eigenvector the model has to learn introduces an additional source of estimation error. We follow the same experimental setup as in the previous experiment (both the dataset and the kernel function); instead of selecting two eigenvectors, we gradually increase the number of eigenvectors included in the learning targets.
$$
y_n = \frac{1}{n}\sum_{i=1}^n u_i,
$$
where $u_i$ is the sorted eigenvectors, and $n$ is the number of eigenvectors. We choose to use the ``mean'' to avoid an increase in magnitude as the number of eigenvectors increases. For the generalization error $\err_{ge}$, we opt for the sum of absolute errors rather than squared errors to prevent spurious curvature in the relationship between the number of eigenvectors and the generalization error. With a slight abuse of notation, we employ the same notation to denote the generalization error.

\begin{figure}[ht]
    \centering
    \includegraphics[width=0.95\linewidth]{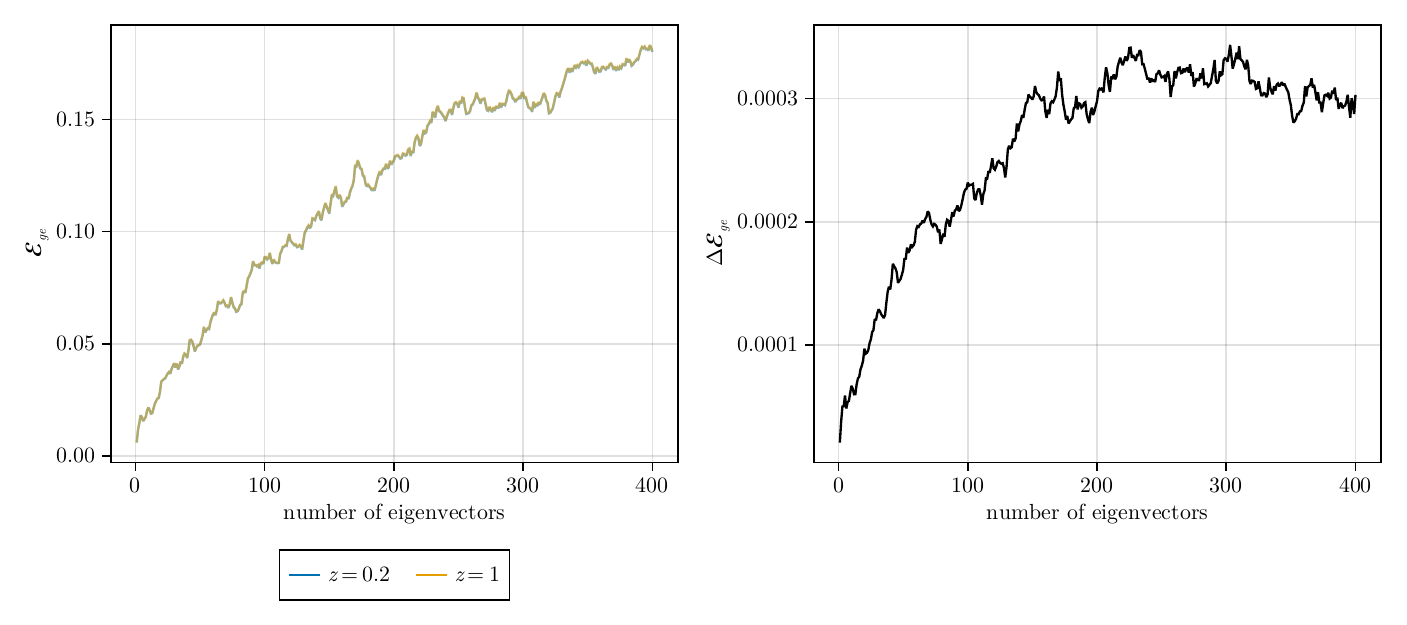}
    \caption{Left: Generalization error increases as the number of eigenvectors that the learning target aligned with increases. Right: difference in generalization error $\err_{ge}$ caused by regularizer. In our experiments, we see there are not significant improvements for learning targets that align with the trailing eigenvectors, as the plot on the left is visually indistinguishable. On the right, we observe that the actual difference in error is also very small and trending upwards as the number of eigenvectors increases.}
    \label{fig:eigenloss}
\end{figure}

We also show in \Cref{fig:100err} that our analysis indicates that when learning targets are aligned with the trailing eigenvector, not only does the generalization error increase, but the volatility also increases. In the experiment, we randomly sampled 700 images from a total of 1000 to form our training set and utilized the remaining images as the testing set. We repeat each learning target 100 times to illustrate its volatility.

\begin{figure}[ht]
    \centering
    \includegraphics[width=0.75\linewidth]{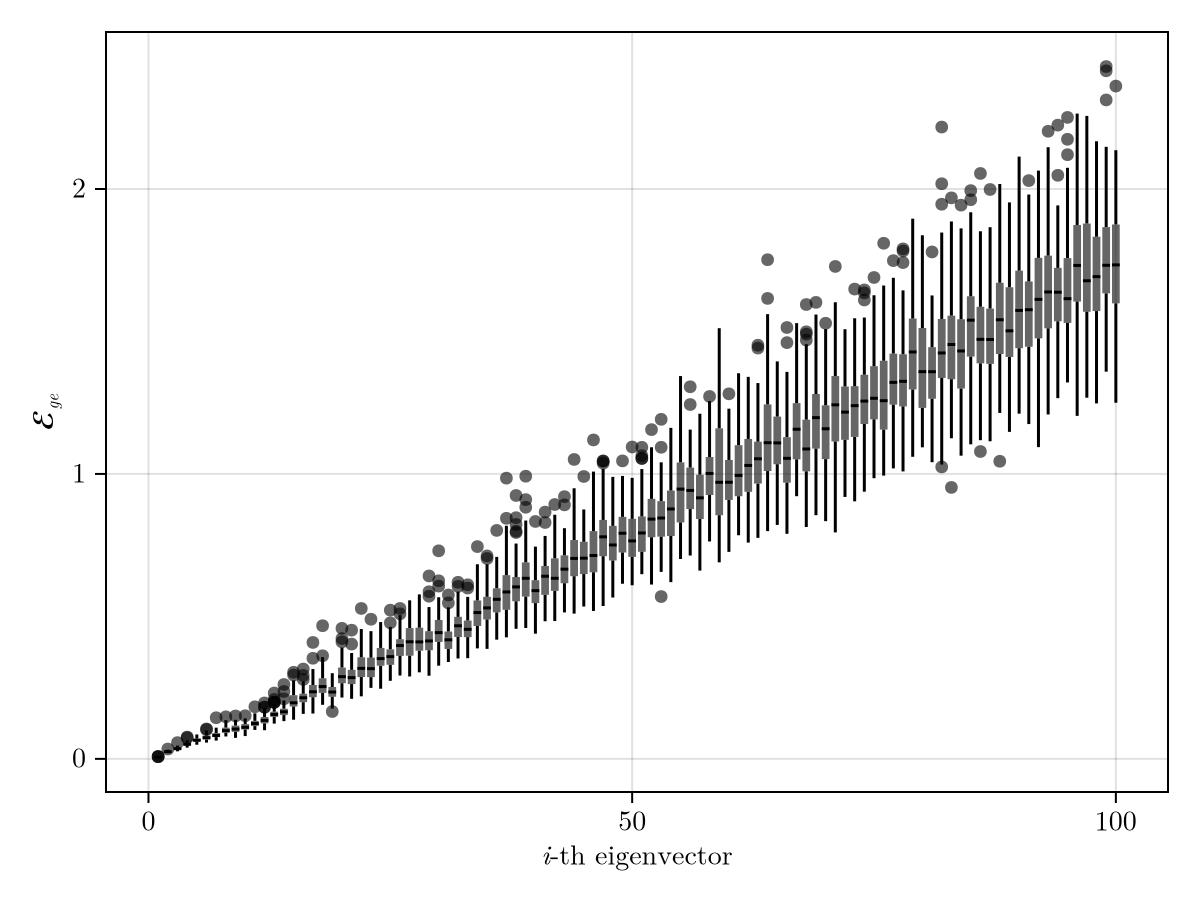}
    \caption{Generalization error when treating each eigenvector as an individual synthetic learning target.}
    \label{fig:100err}
\end{figure}

\subsection{Reconstruction error and generalization}\label{sec:rege}

Next, we will demonstrate that, according to \Cref{cor:triv}, the reconstruction error has limited predictive power regarding the generalization error. We sampled 1000 images labeled ``4" or ``9" from the MNIST dataset for our training data. By choosing the following four kernels, we demonstrate the following: 1) an improperly scaled kernel can have a small reconstruction error due to the magnified eigenvalues; 2) A near-zero training error does not indicate overfitting, as a squared kernel can exhibit both a small reconstruction error and a generalization error; 3) trivial kernels can exhibit near-zero training errors while demonstrating trivial generalization. Without loss of generality, all the kernels used are inner-product kernels. The first three kernel functions are as follows: 
\begin{align}
    f_1(t) &= t, \;\; f_2(t) = \frac{t}{\sqrt{784}}, \;\; f_3(t) =  \frac{t^2}{\sqrt{784}}.
\end{align}
Then, the $(j,k)$th element of the kernel matrix constructed with $f_i$ is $[K_i]_{j,k}:= f_i(x_j^\top x_k)$. The trivial kernel\footnote{Although this is not a valid kernel in RKHS, but in finite dimension, it is a valid way to construct kernel matrices.} is defined as $[K_4]_{j,k} = 500 \delta_{jk}$, where $\delta$ is the Dirac delta. We use $1/n \sum_{i=1}^n(y_i-\hat{y}_i)^2$ to calculate both the reconstruction error and the generalization error. All kernels were trained and tested using the same data set. 

\begin{table}[!ht]
\setlength\tabcolsep{0pt}
\centering
% \begin{threeparttable}
\caption{Reconstruction error and generalization error comparison for different kernels}\label{tbl:rege}
\begin{tabular}{@{\extracolsep{2ex}}*{5}{rcccc}}
\toprule
\textbf{} & \multicolumn{4}{c}{\textbf{Kernels}} \\
\cmidrule{2-5}
\textbf{} & $K_1$ & $K_2$ & $K_3$ & $K_4$ \\
\midrule
\textbf{Reconstruction error} & 0.08 & 0.106 & 4.6e-6 & 4.0e-8 \\
\textbf{Generalization error} & 0.382 & 0.279 & 0.193 & 1.99 \\
\bottomrule
\end{tabular}
% \end{threeparttable}
\end{table}

From \Cref{tbl:rege}, we observe the following: 1) the improperly scaled kernel $K_1$ exhibits a smaller reconstruction error in comparison to the properly scaled counterpart $K_2$, due to larger eigenvalues; however, it generalizes worse; 2) $K_3$ exhibits both a smaller reconstruction error and a generalization error, suggesting that a near-zero reconstruction error does not indicate overfitting or poor performance; 3) $K_4$ also exhibits a near-zero reconstruction error due to high eigenvalues; however, it demonstrates poor generalization. This indicates that reconstruction is a trivial task with minimal predictive power in generalization.

\section{Discussion: Limitations and Comparison}

% \textcolor{red}{TODO: no noise! not even homoskedastic! Main analysis is on K and $\Delta K$, which depends only on $x$ (and variability in sampling process); noise would affect targets $y$, which affects only one term in the bound in eq (14). Future work?}

% \textcolor{red}{TODO: see typst stuff on data imputation and missing values.}

One limitation is that we assume that observed labels $y$ are identical to the ground truth; that is, we assume no noise in the targets. Based on our analysis, one may be able to bound the effects of (either homoskedastic or heteroskedastic) noise by considering the projection of the noise onto (mis)estimated eigenvectors. A second limitation is that our analysis studies the problem of learnability in terms of observed information, though in real-world settings data may contain missing values. It is possible that the standard practice of data imputation (using a kernel) could be incorporated into our analysis, \eg as additional structured noise in the perturbed $\Delta K$. A third limitation is that our generalization bound in \cref{eq:errbound} was derived from a sequence of inequalities, which may not be tight in practice; in this sense, our results are an initial exploration of how perturbation analysis can be applied to KRR, and we expect that further developments in perturbation analysis will be able to sharpen these bounds. All of these limitations point to interesting directions for future work.

Our work differs from several notable existing studies, such as \cite{jacot2020, canatar2020, bordelon2020, simon2021}, in terms of the assumptions and perspectives that underpin our analysis. We primarily focus on the effects of finite sample sizes on the generalization error, while the existing literature addresses asymptotic behaviors in infinite dimensions, typically through Gaussian process-related tools. We aim to provide a fresh approach to kernel alignment analysis with finite matrix perturbation theory. As we have stated earlier, we prefer the brevity of the results to illustrate the qualitative insights our analysis can provide, where the tightness of our bounds is likely to be further refined. However, we will defer the sharpening of the bound, along with further investigation into more structured kernel matrices, to future work. 

Existing work on the analysis of generalization, \eg \cite{smale2005, simon2021, jacot2020}, tends to treat the function as an infinite-dimensionall vector, with the training data viewed as random samples drawn from a Gaussian process. However, this assumption overlooks the complexity and richness of the information embedded within the training data and the learning targets. 
%This omission contributes to the misconception regarding the relationship between reconstruction error and generalization error. Additionally, by considering infinite dimensions, the misaligned issue of predictions in non-parametric KRR vanishes, as suggested by our analysis. This is where our work contributes to bridging the existing gap. 
We found that generalization fundamentally relies on the consistent estimation of ``useful'' information, \ie eigenvalues and eigenvectors that are aligned to the targets, particularly when dealing with finite sample sizes.
In contrast, in \cite{jacot2020}, the generalization error and the training error are correlated by a factor $\vartheta(\lambda^2)/\lambda$, which is independent of the learning targets. 
We have demonstrated, through both our analysis and numerical experiments, that this is not the case; we utilize synthesized learning targets to demonstrate that the ratio of generalization error to training error depends strongly on the learning targets. \citet{simon2021} also provided a similar ratio called ``overfitting coefficient'', which suffers from a similar issue. %The primary reason for the discrepancy in these results, in our view, is that they blurred the distinction between learning targets and the eigenmodes to be learned due to their Gaussian process setup. While these methods can still provide insights into generalization, they will fail in various cases, \eg the kernel is ill-formed, as we show in \Cref{cor:triv} and in the experiments conducted in \Cref{sec:rege}. the reconstruction error can be arbitrarily small and is unlikely to serve as a reliable indicator of generalization error, rendering the correlation practically misleading. 

% Our approach also provides an upper bound analysis for worst case scenarios. Compared to the expectation approach taken by existing literature, \eg \cite{simard2000,jacot2020}, our analysis reveals a potential risk that was obscured by the expectation approach, namely, volatility. Despite in theory, given enough samples, we can learn the trailing eigenvectors; in the finite sample case, we cannot ensure good performance. This risk is revealed through the utilization of tools from matrix perturbation analysis. 
% By assuming infinite dimensions for the data, \eg \cite{bordelon2020, canatar2020}, it essentially eliminates the challenges associated with finite samples, particularly concerning the estimation of trailing eigenvectors. These analyses primarily address the inherent randomness from noise, rather than the randomness caused by imperfect samples. Hence, it is reasonable to expect that regularization will be effective in this regime, where the perturbation effects of finite sampling diminish. 

Our results also provide insights into the limitations of regularization for Kernel Ridge Regression (KRR) from the perspective of a finite kernel matrix. From our previous analysis and \Cref{rm:reg}, we show that $d_i + z \ll \PP_{u_i}^yd_i$ is necessary for the eigenvector to be regarded as relevant. Additionally, the eigenvalues of the top eigenvectors typically increase as the size of the kernel matrix grows [citation needed], \ie with an increase in the number of observations. This implies that a higher regularization is more permissible. In \Cref{fig:eigenloss}, we also demonstrate empirically that regularization has minimal impact on generalization. The small differences induced by regularization tend to increase as the learning targets align with the trailing eigenvectors. These observations are aligned with \citet{bordelon2020}, who suggest that explicit regularization might hinder performance when regularization is large relative to the sample size. 

Finally, like \citet{simon2021}, our work differs from that of \citet{canatar2020} and \citet{bordelon2020} in the sense that we derived an explicit expression for the generalization error, \ie without relying on implicit expressions. In our case, this comes from using a matrix perturbation perspective rather than random matrix theory as used by \citet{jacot2020}, allowing us to make statements about the finite sample regime.
% Additionally, similar to \citet{simon2021}, our results offer greater interpretability than those of \cite{canatar2020, bordelon2020}. In contrast to \citet{simon2021}, we do not rely on any newly introduced concepts; instead, we derive the generalization bound directly from a matrix perturbation perspective. 

% Additionally, by assuming the ground truth data as Gaussian process, authors also implicitly assume the ordering relationship between data observations, the level of similarities of the data observations. 

\section{Conclusion}

Our study departed from the conventional functional space view, focusing instead on how the specific choice of learning targets significantly impacts a model's generalization behavior, particularly in the context of Kernel Ridge Regression (KRR). This approach allows us to meticulously dissect the intricate relationships among reconstruction error, overfitting, and robust generalization. Our findings show that generalization depends on the model's ability to extract information from data consistently, and different learning targets can lead to significant differences in generalization performance. In contrast to previous work, we observed that the reconstruction error often has minimal predictive power for generalization, as a near-zero training loss is nearly universally achievable. We demonstrate this using synthesized learning targets: as we change the learning targets, the generalization error changes drastically, while the reconstruction error remains relatively constant. This is because the reconstruction error is independent of the stability of the information. In Kernel Ridge Regression (KRR), this phenomenon arises from the perfect alignment of the eigenvectors of two kernel matrices in the KRR prediction formula $\hat{y} = K^\prime K A$. 

In passing, we also notice that the regularization parameter frequently has a very limited positive impact, as we show in the analysis and experiments. Regularization is more likely to have a positive impact when the learning targets align with trailing eigenvectors, but aligning with eigenvectors is already a sign of poor generalization. 

% ; its primary function, we argue, is to mitigate numerical instability rather than to reduce the hypothesis space. When regularization does help with generalization, it is usually to salvage a disaster rather than to boost a success. \ie when regularization influences generalization, it is specifically due to its alignment with trailing eigenvectors, which occurs when the corresponding eigenvalue is comparable in magnitude to the regularization parameter.
We proposed bounding the generalization error through the lens of eigenvalue and eigenvector estimation. we show that good generalization requires either increasing alignment of eigenvectors or an increase in the eigenvalue; both of these can be achieved if the learning targets are aligned with the top eigenvectors. 

With limited assumptions regarding kernel matrices, our generalization risk bound is intended more for qualitative insights than for practical application; thus, we reserve further refinements of the risk bound for future work. 

% Finally, our empirical investigations into the relationship between eigenalignment and the double descent phenomenon revealed that this intriguing behavior can manifest when target variables exhibit a correlation with trailing eigenvectors. Collectively, these insights underscore the critical and often understated importance of eigenalignments, offering a more comprehensive and nuanced understanding of generalization in KRR.
\bibliographystyle{plainnat}
\bibliography{paperpile}% common bib file
%% if required, the content of .bbl file can be included here once bbl is generated
%%\input sn-article.bbl

\end{document}